\documentclass[lettersize,journal]{IEEEtran}
\usepackage{amsmath,amsfonts}
\usepackage{algorithmic}
\usepackage{algorithm}
\usepackage{array}
\usepackage[caption=false,font=normalsize,labelfont=sf,textfont=sf]{subfig}
\usepackage{textcomp}
\usepackage{stfloats}
\usepackage{url}
\usepackage{verbatim}
\usepackage{graphicx}
\usepackage{cite}
\usepackage{amssymb}
\usepackage{booktabs}
\usepackage{tabularx}
\usepackage{orcidlink}  
\usepackage{microtype}

\newtheorem{proposition}{Proposition}
\newtheorem{assumption}{Assumption}
\newtheorem{theorem}{Theorem}
\usepackage[justification=centering]{caption}
\usepackage{xcolor}
\begin{document}

\title{Credibility-Aware Learning and Control for Safe USV Navigation under Perception Uncertainty}

\author{Yuhang Zhang, Shuqi Chai, Yukang Zhang, Liusha Yang, Mingchuan Zhang, Wei Wang, Qingjiang Shi and Quanbo Ge$^{\orcidlink{0000-0001-6361-880X}}$, \IEEEmembership{Member, IEEE} 
\thanks{This work was supported in part by the National Natural Science Foundation of China (NSFC) under Grants 62033010 and 62401376, in part by the Qing Lan Project of Jiangsu Province of China under Grant R2023Q07, in part by the Shenzhen Science and Technology Program under Grant RCBS20231211090816034, and in part by the Natural Science Foundation of Top Talent of SZTU under Grant GDRC202332. (Corresponding author: Quanbo Ge.)}
\thanks{Yuhang Zhang and Mingchuan Zhang are with the School of Information Engineering, Henan University of Science and Technology, Luoyang 471023, China (e-mail: yhZhang@stu.haust.edu.cn; zhang\_mch@haust.edu.cn).}
\thanks{Shuqi Chai is with the Shenzhen Research Institute of Big Data, Shenzhen 518172, China, also with the Chinese University of Hong Kong, Shenzhen, Shenzhen 518172, China (e-mail: schai@sribd.cn).}
\thanks{Yukang Zhang is with the School of Logistics Engineering, Shanghai Maritime University, Shanghai 200135, China (e-mail: 202430210046@stu.shmtu.edu.cn).}
\thanks{Liusha Yang is with Shenzhen Technology University, Shenzhen 518118, China (e-mail: yangliusha@sztu.edu.cn).}
\thanks{Wei Wang is with the School of Computer Science, Wuhan University, Wuhan 430072, China (e-mail: wangw@whu.edu.cn).}
\thanks{Qingjiang Shi is with the School of Software Engineering, Tongji University, Shanghai 201804, China (e-mail: shiqj@tongji.edu.cn).}
\thanks{Quanbo Ge is with the School of Automation, Nanjing University of Information Science and Technology, Nanjing 210044, China (e-mail: geqb@nuist.edu.cn).}
}

\markboth{IEEE Transactions on Vehicular Technology,~Vol.~XX, No.~XX, Month~2026}%
{IEEE Transactions on Vehicular Technology,~Vol.~XX, No.~XX, Month~2026}


\maketitle

\begin{abstract}

Safe navigation for Unmanned Surface Vehicles (USVs) under the International Regulations for Preventing Collisions at Sea (COLREGs) remains challenging in dynamic maritime environments, especially when perception uncertainty is miscalibrated. Errors in state estimation can produce unreliable belief states that mislead value learning, while logic based on discrete traffic rules can cause abrupt action corrections. To address these challenges, we integrate Credibility-Weighted Value Learning (CWVL) with Covariance- and Recovery-Aware Control Barrier Function Quadratic Programming (CoReCBF-QP). CWVL derives a dynamic trust factor from the discrepancy between the covariance estimated by the filter and empirical error statistics. This factor modulates the critic's heteroscedastic loss and limits overfitting to miscalibrated observations. CoReCBF expands the collision geometry according to uncertainty and incorporates terms for braking and turning recovery. The resulting hyperbolic safety boundary preserves feasible avoidance velocities and supplies the QP safety constraint. A continuous COLREGs-aware reference in the objective promotes starboard maneuvers in Rule 14 head-on and Rule 15 give-way crossing encounters. Simulations show improved robustness in collision avoidance and COLREGs event compliance, achieving an 82.0\% success rate with ten target ships beyond the training range.
\par\noindent Code: \url{https://github.com/zyhang6656/CredNav-USV}
\end{abstract}

\begin{IEEEkeywords}
USV Navigation, Safe Reinforcement Learning, Perception Uncertainty, Control Barrier Functions, Credibility-Aware Learning
\end{IEEEkeywords}
\vspace{-0.5\baselineskip}

\section{Introduction}
\label{section:Introduction}

\IEEEPARstart{I}{n} recent years, Unmanned Surface Vehicles (USVs) have been increasingly used for hydrographic surveys, environmental monitoring, and maritime logistics, forming an important class of Maritime Autonomous Surface Ships (MASS)~\cite{10021250}. Safe navigation becomes more difficult in dynamic encounters involving several vessels~\cite{Jeong2024ActiveLearningASV}. Measurements of nearby vessels are affected by sensor noise, occlusion, and environmental disturbances, which introduce uncertainty into their estimated motion~\cite{Meyers2024COLREGsConstrainedChannel}. An avoidance maneuver must also remain within the capabilities of an underactuated vessel and be consistent with the International Regulations for Preventing Collisions at Sea (COLREGs)~\cite{Cho2023RestrictedWaterwaysCOLREGs,li2021path,heiberg2022risk}. Errors in the estimated relative state can therefore change the assessed collision risk and, in turn, the maneuver selected to resolve an encounter.\par

Model-based planning and control remain common in USV collision avoidance. At the local level, the Artificial Potential Field (APF) method provides simple reactive guidance but can become trapped in local minima~\cite{khatib1986real}. The Dynamic Window Approach (DWA) evaluates executable velocities over a short interval to account for local motion constraints, although its reactive search can encounter the same difficulty in complex environments~\cite{fox1997dynamic,Molinos2019DynamicWindow}. Hybrid $A^*$ extends the planning horizon and includes nonholonomic motion in path search, but repeated replanning becomes costly when obstacles move~\cite{xu2024hybrid}. For encounters with moving vessels, the Velocity Obstacle (VO) method uses relative motion to identify velocities that may lead to collision~\cite{fiorini1998motion}. It can, however, produce oscillatory decisions and does not directly optimize control smoothness. Model Predictive Control (MPC) instead optimizes vessel motion over a finite horizon and can incorporate dynamics, safety constraints, and traffic rules~\cite{Wei2023MPCMarineSurvey,Peng2023SafetyCertifiedBerthing,Tsolakis2024TrafficRulesASV,Han2024PotentialFieldMPC}. Doing so requires repeated nonlinear optimization online. Although their formulations differ, these methods rely on predefined motion relations and estimated relative states.

Learning methods provide another way to generate avoidance decisions in dynamic encounters~\cite{10379139,Lou2024BalancedUSVDRL}. Deep Reinforcement Learning (DRL) has been combined with VO information to support collision avoidance in dynamic environments~\cite{han2022reinforcement,Li2025DQNVO,Xie2023DRLVO}. Other approaches learn control commands from observed states using Convolutional Neural Networks (CNNs) and Gated Recurrent Units (GRUs)~\cite{tao2024integrated}, or employ feature-enhanced training for underactuated USVs~\cite{wang2023obstacle}. More recent work has used deep ensembles to estimate epistemic uncertainty~\cite{Zhang2024LagU} and return distributions to represent navigation risk~\cite{huang2025uncertainty}. These uncertainty measures describe the learned policy or return, but they do not directly assess the consistency of the belief state supplied by the estimator.

Under partial observability, the critic learns from an estimated belief state rather than the latent state. The belief may become unreliable when the estimator's error model does not match the physical system~\cite{ge2023novel,ge2024credible,zhang2024one}. Errors in the belief state are then passed to value learning, where they coexist with noise in the value target. Heteroscedastic regression models target noise through a variance that changes with the input~\cite{kendall2017what,Mai2022IVRL}. The predicted variance can reduce the influence of noisy targets, but poor calibration or overestimation may suppress useful samples~\cite{Mai2022IVRL}. Beyond sample weighting, variance-dependent scaling also shapes critic gradients; in Gaussian distributional critics, this dependence contributes to training instability and reward-scale sensitivity~\cite{Duan2025DSACT}. The quality of critic gradients also matters when they are used to construct state perturbations, since unreliable Q estimates or gradients may fail to identify genuinely adverse states~\cite{Zhang2020RobustStatePerturbations}. The target uncertainty represented by the critic and the reliability of the belief input are therefore separate concerns.

Uncertain relative states also make deterministic collision geometry unreliable. Chance-constrained methods require the collision probability, computed from an assumed uncertainty distribution, to remain below a prescribed risk level. This idea has been applied to reciprocal velocity obstacles and predictive collision avoidance~\cite{Gopalakrishnan2017PRVO,Zhu2019ChanceConstrainedMAV}. However, the resulting risk bound depends on the assumed distribution and covariance, and does not reveal when the uncertainty estimate itself is unreliable. Collision Cone Control Barrier Functions (C3BFs) address dynamic obstacles by constructing barrier functions from relative position and velocity~\cite{Thontepu2023C3BFUGV,Tayal2024C3BFUAV,Tayal2026C3BF}. CBF-VO further uses VO information for guidance while enforcing safety through a separate, less restrictive control barrier function constraint~\cite{SanchezRoncero2025VOCBF}. These formulations are not readily transferred to underactuated USVs. Surge thrust and yaw moment do not generally appear at the same relative degree in a barrier constructed directly from relative motion. Unactuated sway and coupled surge, sway, and yaw dynamics further complicate their inclusion in one affine safety constraint. High-Order Control Barrier Functions (HOCBFs) extend barrier constraints to arbitrary relative degrees through a recursive construction~\cite{Xiao2022HOCBF}. For the input structure considered here, applying HOCBFs would require a dynamic extension that places the surge thrust rate, rather than surge thrust itself, at the same derivative order as the yaw moment. This can address part of the mismatch, but HOCBFs introduce auxiliary functions and nested safety constraints, complicating the construction and analysis of safety conditions~\cite{Taylor2022SafeBackstepping,Ong2024RectifiedCBF}. Moreover, treating the collision cone as a strict boundary can exclude velocities that remain recoverable through braking or turning.

{
Collision safety alone does not determine which maneuver should be preferred in a regulated encounter. Existing studies have incorporated COLREGs into reinforcement learning methods that account for risk~\cite{wang2024colregs,10530091}, deep reinforcement learning that uses VO information~\cite{Li2025DQNVO}, and predictive control for vessel motion under navigation rules~\cite{Tsolakis2024TrafficRulesASV}. Many methods identify encounter responsibilities through discrete bearing sectors or binary indicators~\cite{DeVries2022RegulationsAwareMotionPlanning,10707649,Zhou2025CollisionAvoidanceTracking}. These representations are effective for encounter classification, but a maneuver reference derived directly from them can change abruptly as the relative bearing crosses a sector boundary. A continuous representation of COLREGs maneuver preference is therefore better suited to continuous action guidance.
}

{
Under partial observability, uncertainty, collision geometry, and navigation rules jointly affect decision making for underactuated USVs. We formulate this problem as a Partially Observable Markov Decision Process (POMDP) and develop a safe reinforcement learning framework. Belief credibility informs value learning, barrier constraints account for covariance and vessel recovery, and a continuous COLREGs reference guides action correction through quadratic programming. The main contributions are as follows:
}
\begin{enumerate}
\item {To mitigate the impact of unreliable perception on value learning, we propose Credibility-Weighted Value Learning (CWVL). Heteroscedastic critics account for uncertainty in value targets through learned variance estimates, whose poor scaling may destabilize training. Perception errors affect value learning by degrading the reliability of the belief state used by the critic. CWVL evaluates this reliability through a dynamic trust factor grounded in filter credibility. The factor quantifies the agreement between the filter's predicted uncertainty and realized estimation errors. It assigns lower weights to samples associated with unreliable beliefs, thereby reducing their influence on value learning.}

\item {We develop a Covariance- and Recovery-Aware Control Barrier Function (CoReCBF) for underactuated USVs under spatial uncertainty. CoReCBF expands the collision geometry according to state covariance and accounts for braking and turning recovery when defining the safety condition. The resulting hyperbolic safety boundary retains recoverable avoidance velocities and supports action correction through quadratic programming.}

\item {We introduce a continuous COLREGs reference for action correction. A bounded yaw reference continuously represents the starboard maneuver preference in Rule 14 head-on and Rule 15 give-way crossing encounters. During action correction, the quadratic program follows this preference within the admissible safety set defined by CoReCBF constraints.}
\end{enumerate}

{The remainder of this paper is organized as follows. Section~\ref{section:Problem_Statement} formulates USV navigation as a POMDP. Section~\ref{section:credibility_learning} presents belief estimation and CWVL. Section~\ref{section:shielded_decision} introduces CoReCBF, action correction through quadratic programming, the continuous COLREGs reference, and reward design. Section~\ref{section:Experiments} reports the experimental evaluation. Section~\ref{section:Conclusion and future work} concludes the paper.}

\section{Problem Formulation}
\label{section:Problem_Statement}

Autonomous navigation in cluttered maritime environments necessitates simultaneous collision avoidance and adherence to traffic rules. However, practical perception systems inevitably introduce state estimation errors, rendering the environment partially observable. To address the decision-making problem under such perceptual uncertainty, the navigation task is formulated as a discrete-time Partially Observable Markov Decision Process (POMDP) with a continuous latent state space and 
a bounded continuous action space. Formally, the POMDP is defined by the tuple
\begin{equation}
    \mathcal{M} = (\mathcal{S}, \mathcal{A}, \mathcal{T}, \mathcal{R}, \Omega, \mathcal{O}).
\end{equation}
In this formulation, $\mathcal{S} \subseteq \mathbb{R}^{n_s}$ denotes the continuous latent state space. Let $N$ denote the number of target ships. At discrete time step 
$k\in\{0,1,\ldots\}$, the joint latent state is defined as
\begin{equation}
\mathbf{s}_k =
\left[
\left(\mathbf{x}^{\mathrm{os}}_k\right)^{\mathsf T},
\left(\mathbf{x}^{\mathrm{ts}}_{1,k}\right)^{\mathsf T},
\ldots,
\left(\mathbf{x}^{\mathrm{ts}}_{N,k}\right)^{\mathsf T}
\right]^{\mathsf T}
\in \mathcal{S},
\label{eq:state_definition}
\end{equation}
where $\mathbf{x}^{\mathrm{os}}_k$ and $\mathbf{x}^{\mathrm{ts}}_{i,k}$ denote the 
kinematic substates of the own ship and target ship, respectively.
The substate of the own ship is
\begin{equation}
\mathbf{x}^{\mathrm{os}}_k =
\left[
p^{\mathrm{os}}_{x,k},
p^{\mathrm{os}}_{y,k},
\psi^{\mathrm{os}}_k,
u^{\mathrm{os}}_k,
v^{\mathrm{os}}_k,
r^{\mathrm{os}}_k
\right]^{\mathsf T},
\label{eq:os_state}
\end{equation}
where $(p^{\mathrm{os}}_{x,k},p^{\mathrm{os}}_{y,k})$ denotes the position of the own ship in the global horizontal frame, $\psi^{\mathrm{os}}_k$ is the heading angle, and 
$u^{\mathrm{os}}_k$, $v^{\mathrm{os}}_k$, and $r^{\mathrm{os}}_k$ are the surge 
velocity, sway velocity, and yaw rate.
For each target ship $i=1,\ldots,N$, the corresponding substate is
\begin{equation}
\mathbf{x}^{\mathrm{ts}}_{i,k} =
\left[
p^{\mathrm{ts}}_{x,i,k},
p^{\mathrm{ts}}_{y,i,k},
v^{\mathrm{ts}}_{x,i,k},
v^{\mathrm{ts}}_{y,i,k}
\right]^{\mathsf T},
\label{eq:ts_state}
\end{equation}
where $(p^{\mathrm{ts}}_{x,i,k},p^{\mathrm{ts}}_{y,i,k})$ and 
$(v^{\mathrm{ts}}_{x,i,k},v^{\mathrm{ts}}_{y,i,k})$ are its position 
and velocity components in the global horizontal frame. 

The action space $\mathcal{A}$ is defined according to the three-degree-of-freedom (3-DOF) underactuated USV model. Since the sway channel is unactuated, only the surge thrust $\tau_{u}$ and yaw moment $\tau_{r}$ are directly commanded. Accordingly, the action vector at time step $k$ is defined as
\begin{equation}
\mathbf{a}_k =
\left[
\tau_{u,k},
\tau_{r,k}
\right]^{\mathsf T}
\in \mathcal{A},
\label{eq:action_definition}
\end{equation}
which is subject to the following physical actuator bounds:
\begin{equation}
\tau_u^{\min}\le \tau_{u,k}\le \tau_u^{\max}, 
\qquad
|\tau_{r,k}|\le \tau_r^{\max}.
\label{eq:action_bounds}
\end{equation}

The vessel motion is described by a discrete-time model. For the own ship,
define the pose vector and the body-frame velocity vector as
\begin{equation}
\boldsymbol{\eta}^{\mathrm{os}}_k =
\left[
p^{\mathrm{os}}_{x,k},
p^{\mathrm{os}}_{y,k},
\psi^{\mathrm{os}}_k
\right]^{\mathsf T},
\qquad
\boldsymbol{\nu}^{\mathrm{os}}_k =
\left[
u^{\mathrm{os}}_k,
v^{\mathrm{os}}_k,
r^{\mathrm{os}}_k
\right]^{\mathsf T},
\label{eq:os_eta_nu}
\end{equation}
where $\boldsymbol{\eta}^{\mathrm{os}}_k$ denotes the global position and
heading, and $\boldsymbol{\nu}^{\mathrm{os}}_k$ denotes the surge velocity,
sway velocity, and yaw rate in the body-fixed frame. The 3-DOF
kinematics of the own ship in the horizontal plane are
\begin{equation}
\dot{\boldsymbol{\eta}}^{\mathrm{os}}
=
\mathbf{R}(\psi^{\mathrm{os}})
\boldsymbol{\nu}^{\mathrm{os}},
\label{eq:os_kinematics}
\end{equation}
where
\begin{equation}
\mathbf{R}(\psi)=
\begin{bmatrix}
\cos\psi & -\sin\psi & 0\\
\sin\psi & \cos\psi & 0\\
0 & 0 & 1
\end{bmatrix}.
\label{eq:rotation_matrix_3dof}
\end{equation}
The corresponding underactuated 3-DOF dynamics are written as
\begin{equation}
\mathbf{M}\dot{\boldsymbol{\nu}}^{\mathrm{os}}
+
\mathbf{C}(\boldsymbol{\nu}^{\mathrm{os}})
\boldsymbol{\nu}^{\mathrm{os}}
+
\mathbf{D}\boldsymbol{\nu}^{\mathrm{os}}
=
\boldsymbol{\tau},
\label{eq:os_dynamics}
\end{equation}
where $\mathbf{M}$ is the inertia matrix,
$\mathbf{C}(\boldsymbol{\nu}^{\mathrm{os}})$ is the Coriolis and centripetal
matrix, and $\mathbf{D}$ is the damping matrix. Since the own ship is
underactuated in the sway channel, the generalized control input at time
step $k$ is
\begin{equation}
\boldsymbol{\tau}_k =
\left[
\tau_{u,k},
0,
\tau_{r,k}
\right]^{\mathsf T}.
\label{eq:generalized_input}
\end{equation}
With sampling interval $\Delta t$, forward Euler discretization gives
\begin{equation}
\boldsymbol{\eta}^{\mathrm{os}}_{k+1}
=
\boldsymbol{\eta}^{\mathrm{os}}_{k}
+
\Delta t\,
\mathbf{R}(\psi^{\mathrm{os}}_k)
\boldsymbol{\nu}^{\mathrm{os}}_{k},
\label{eq:os_eta_update}
\end{equation}
\begin{equation}
\boldsymbol{\nu}^{\mathrm{os}}_{k+1}
=
\boldsymbol{\nu}^{\mathrm{os}}_{k}
+
\Delta t\,
\mathbf{M}^{-1}
\left[
\boldsymbol{\tau}_k
-
\mathbf{C}(\boldsymbol{\nu}^{\mathrm{os}}_k)
\boldsymbol{\nu}^{\mathrm{os}}_k
-
\mathbf{D}\boldsymbol{\nu}^{\mathrm{os}}_k
\right].
\label{eq:os_nu_update}
\end{equation}
Thus, the own ship state update can be compactly written as
\begin{equation}
\mathbf{x}^{\mathrm{os}}_{k+1}
=
f_{\mathrm{os}}
\left(
\mathbf{x}^{\mathrm{os}}_k,\mathbf{a}_k
\right).
\label{eq:os_compact_update}
\end{equation}

For each target ship, a constant-velocity model is adopted:
\begin{equation}
\mathbf{x}^{\mathrm{ts}}_{i,k+1}
=
\mathbf{F}_{\mathrm{ts}}
\mathbf{x}^{\mathrm{ts}}_{i,k},
\qquad i=1,\ldots,N,
\label{eq:ts_update}
\end{equation}
where
\begin{equation}
\mathbf{F}_{\mathrm{ts}} =
\begin{bmatrix}
1 & 0 & \Delta t & 0\\
0 & 1 & 0 & \Delta t\\
0 & 0 & 1 & 0\\
0 & 0 & 0 & 1
\end{bmatrix}.
\label{eq:ts_transition_matrix}
\end{equation}
During their active motion intervals, the simulated target ships follow straight-line constant-velocity trajectories. This controlled setting separates measurement-level estimation mismatch from changes in target motion.
Combining the updates of the own ship and target ships, the joint multi-vessel
state update is written as
\begin{equation}
\mathbf{s}_{k+1}
=
f
\left(
\mathbf{s}_k,\mathbf{a}_k
\right).
\label{eq:joint_state_update}
\end{equation}

The observation space $\Omega$ contains the measured own ship state and
the noisy measured states of target ships, with the tilde notation
$\tilde{(\cdot)}$ used for noisy measurements of target ships. At time step $k$,
the joint observation vector $\mathbf{o}_k \in \Omega$ is defined as
\begin{equation}
\mathbf{o}_k = 
\left[ 
(\mathbf{o}^{\mathrm{os}}_k)^{\mathsf T}, 
(\mathbf{o}^{\mathrm{ts}}_{1,k})^{\mathsf T}, 
\ldots, 
(\mathbf{o}^{\mathrm{ts}}_{N,k})^{\mathsf T} 
\right]^{\mathsf T}. 
\label{eq:observation_definition}
\end{equation}
In this work, $\mathbf{o}^{\mathrm{os}}_k$ is used as the self-state feedback
for control, whereas the dominant perception uncertainty is associated with
target ship tracking. Specifically, $\mathbf{o}^{\mathrm{os}}_k$ comprises the measured global pose and body-frame velocities of the 3-DOF model of the own ship:
\begin{equation}
\mathbf{o}^{\mathrm{os}}_k = 
\left[ 
p^{\mathrm{os}}_{x,k}, 
p^{\mathrm{os}}_{y,k}, 
\psi^{\mathrm{os}}_k, 
u^{\mathrm{os}}_k, 
v^{\mathrm{os}}_k, 
r^{\mathrm{os}}_k 
\right]^{\mathsf T}, 
\label{eq:os_observation}
\end{equation}
and $\mathbf{o}^{\mathrm{ts}}_{i,k}$ contains the measured position and velocity of the $i$-th target ship in the global horizontal frame:
\begin{equation}
\mathbf{o}^{\mathrm{ts}}_{i,k} = 
\left[ 
\tilde{p}^{\mathrm{ts}}_{x,i,k}, 
\tilde{p}^{\mathrm{ts}}_{y,i,k}, 
\tilde{v}^{\mathrm{ts}}_{x,i,k}, 
\tilde{v}^{\mathrm{ts}}_{y,i,k} 
\right]^{\mathsf T}, 
\quad i=1,\ldots,N. 
\label{eq:ts_observation}
\end{equation}

The observation likelihood function
$\mathcal{O}(\mathbf{o}_{k+1}\mid\mathbf{s}_{k+1},\mathbf{a}_k)$
specifies how the latent vessel states generate sensor measurements.
Consistent with the observation definition in
Eqs.~\eqref{eq:os_observation}--\eqref{eq:ts_observation}, the state of the own ship
is used as direct self-state feedback, whereas the dominant
perception uncertainty is associated with target ship tracking. For each
target ship, the measurement model is written as
\begin{equation}
\mathbf{o}^{\mathrm{ts}}_{i,k}
=
\mathbf{x}^{\mathrm{ts}}_{i,k}
+
\boldsymbol{\epsilon}^{\mathrm{obs}}_{i,k},
\qquad
\boldsymbol{\epsilon}^{\mathrm{obs}}_{i,k}
\sim
\mathcal{N}
\left(
\mathbf{0},
\boldsymbol{\Sigma}^{\mathrm{obs}}_{i,k}
\right),
\label{eq:ts_measurement_model}
\end{equation}
where $\boldsymbol{\epsilon}^{\mathrm{obs}}_{i,k}$ denotes the measurement error of the target ship
and $\boldsymbol{\Sigma}^{\mathrm{obs}}_{i,k}$ is the
corresponding covariance matrix. This Gaussian measurement model specifies
the component of the observation likelihood for the target ship in the POMDP
formulation.

Finally, the reward function $\mathcal{R}(\mathbf{s}_k,\mathbf{a}_k)$
encodes the navigation objective and collision-avoidance requirement. The
objective is to learn a policy that
maximizes the expected discounted return:
\begin{equation}
    J(\pi) =
    \mathbb{E}_{\pi}
    \left[
    \sum_{k=0}^{\infty}
    \gamma^k
    \mathcal{R}(\mathbf{s}_k,\mathbf{a}_k)
    \right],
    \label{eq:objective}
\end{equation}
where $\gamma\in(0,1)$ is the discount factor, and the expectation is
taken over trajectories induced by the policy and the stochastic
observation process.

Since the latent multi-vessel state $\mathbf{s}_k$ is not directly
available, decision making is conditioned on the belief state
\begin{equation}
    \mathbf{b}_k(\mathbf{s}_k)
    \triangleq
    p(\mathbf{s}_k\mid h_k),
    \qquad
    h_k=(\mathbf{o}_{0:k},\mathbf{a}_{0:k-1}),
    \label{eq:belief_definition}
\end{equation}
which summarizes the information contained in the history of noisy
measurements and executed actions. Accordingly, the navigation policy is
written as
\begin{equation}
    \mathbf{a}_k \sim \pi(\cdot\mid \mathbf{b}_k).
    \label{eq:belief_policy}
\end{equation}

\section{Credibility-Aware Learning under Perceptual Uncertainty}
\label{section:credibility_learning}
To address the POMDP formulated in
Section~\ref{section:Problem_Statement}, we develop a credibility-aware
learning and control framework for navigation under perception uncertainty.
A Kalman filter constructs a Gaussian belief state from noisy measurements of target ships, while estimator consistency characterizes the credibility of
that belief. The resulting credibility information is used throughout the
framework: the trust factor modulates value learning in CWVL-PPO, and the
covariance envelope informs the barrier constraints used by CoReCBF-QP. The
learned policy, COLREGs-aware reference, and QP action correction form the
closed-loop navigation architecture shown in Fig.~\ref{fig:frame}.
\begin{figure*}[!t]
    \centering
    \includegraphics[width=\textwidth]{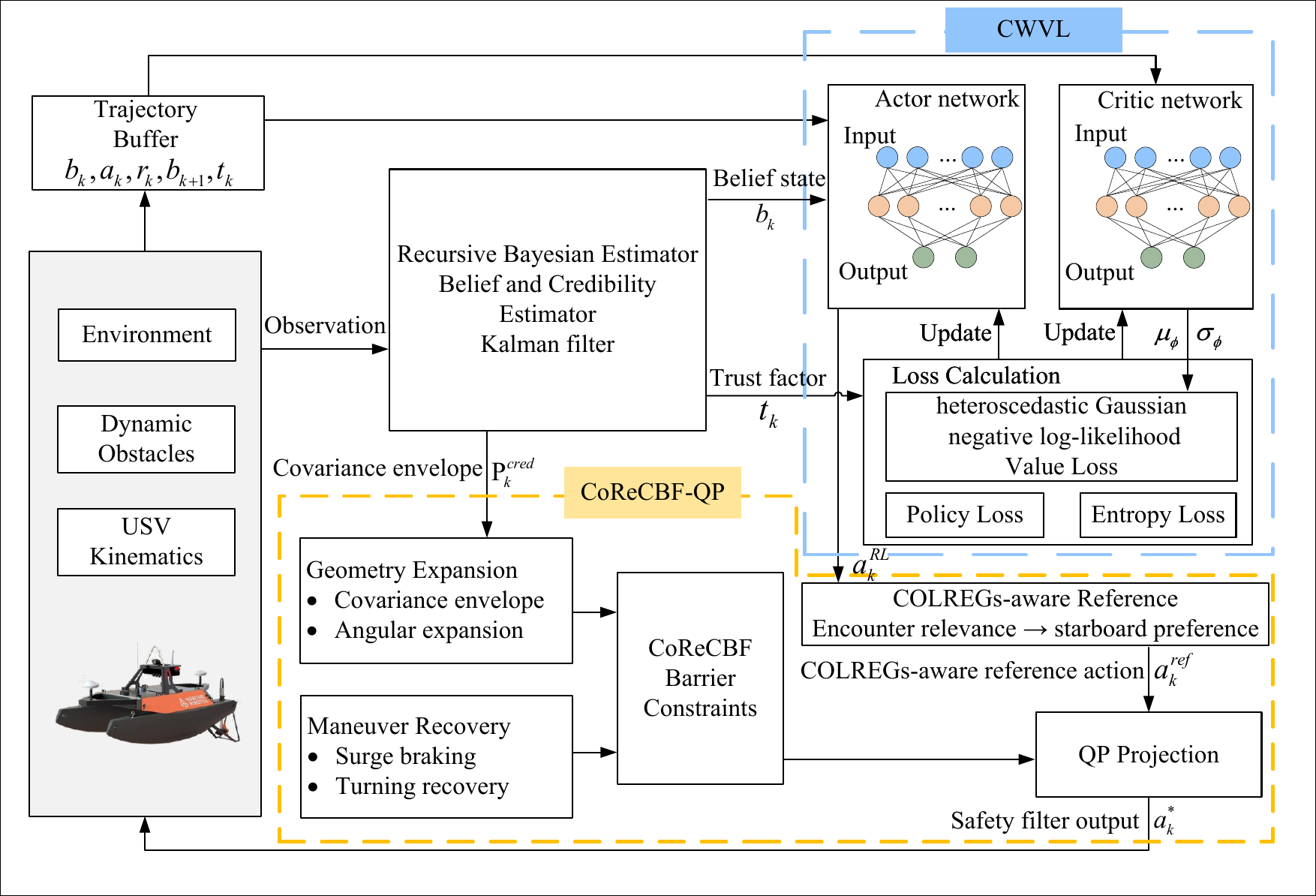}
    \caption{Overview of the proposed credibility-aware framework for USV
    navigation under perception uncertainty.}
    \label{fig:frame}
\end{figure*}
\subsection{Belief-State Estimation and Credibility}
\label{subsec:belief}

To infer the latent state from noisy sensor measurements, we employ a
Kalman-filter-based recursive estimator using the state and observation
models defined in Section~\ref{section:Problem_Statement}. In belief-space
planning, the belief state is generally represented as a posterior
distribution over the latent state conditioned on the history of
observations and actions~\cite{kaelbling1998planning,thrun2005probabilistic}.
However, maintaining this full distribution is computationally
prohibitive for online collision avoidance, especially when multiple
target ships are considered simultaneously. We therefore adopt a Gaussian
approximation, under which the belief state at time step $k$ is
represented by a Gaussian posterior:
\begin{equation}
    \mathbf{b}_k(\mathbf{s}_k) \approx \mathcal{N}(\mathbf{s}_k; \mathbf{\hat{s}}_{k|k}, \mathbf{P}^f_{k|k}),
    \label{eq:belief_approx}
\end{equation}
where $\mathbf{s}_k$ is the latent state defined in~\eqref{eq:state_definition}, and $\mathcal{N}(\mathbf{s}_k; \mathbf{\hat{s}}_{k|k}, \mathbf{P}^f_{k|k})$ denotes the multivariate Gaussian probability density function with mean $\mathbf{\hat{s}}_{k|k}$ and covariance $\mathbf{P}^f_{k|k}$. Here, $\mathbf{\hat{s}}_{k|k}$ is the posterior mean estimate of the latent state obtained from the Kalman-filter update, and $\mathbf{P}^f_{k|k}$ is the filter calculated mean square error (FMSE) matrix.

In the considered multi-target collision-avoidance scenario, the belief state is defined over the joint latent state $\mathbf{s}_k$, which consists of the own ship state and the states of $N$ target ships. The target ships are tracked independently, so their covariance blocks in $\mathbf{P}^{f}_{k|k}$ are assumed to be block diagonal without cross-covariance terms. For target ship $i$, we denote the posterior mean of its substate and the corresponding FMSE matrix by $\hat{\mathbf{x}}^{\mathrm{ts}}_{i,k|k}$ and $\mathbf{P}^f_{i,k|k}$, respectively.
If the assumed process and measurement noise statistics match those of the physical system, $\mathbf{P}^f_{i,k|k}$ is expected to be consistent with the true mean-square-error (TMSE) matrix $\mathbf{P}^m_{i,k|k}$. In practice, however, noise-statistics mismatch and unmodeled dynamics may make the FMSE matrix underestimate the actual error or become inconsistent. Moreover, the exact TMSE matrix $\mathbf{P}^m_{i,k|k}$ is not directly available during deployment. To obtain a practical reliability measure for downstream learning, we adopt the Trust Factor Estimation with Directly Numerical Solving (TFE-DNS) method~\cite{ge2023novel}. Its rationale follows from the innovation-consistency property of the Kalman filter. With correctly specified process and measurement noise statistics, the innovation sequence is approximately zero-mean, and its empirical covariance agrees with the filter-predicted covariance. Noise-statistics mismatch or unmodeled dynamics breaks this consistency, allowing TFE-DNS to use the resulting innovation-covariance deviation to numerically obtain the online TMSE estimate $\mathbf{\hat{P}}^m_{i,k|k}$. The trust factor associated with the $i$-th target ship is then defined as
\begin{equation}
t_{i,k} = \frac{1}{\| \mathbf{P}_{i,k|k}^{f} - \mathbf{\hat{P}}_{i,k|k}^{m} \|_2 + 1}, \quad t_{i,k} \in (0, 1],
\label{eq:credibility_factor}
\end{equation}
where $\|\cdot\|_2$ denotes the matrix 2-norm, evaluated after applying the same normalization to position and velocity components in both the FMSE matrix $\mathbf{P}_{i,k|k}^{f}$ and the online TMSE estimate $\mathbf{\hat{P}}_{i,k|k}^{m}$. A smaller normalized discrepancy indicates better filter consistency and yields a trust factor closer to one.

{
Let $\mathcal I_k$ denote the set of target ships identified as risk active
according to their current distance and TCPA.
The global trust factor is defined as
\begin{equation}
t_k=
\begin{cases}
\displaystyle\min_{i\in\mathcal I_k}t_{i,k},
& \mathcal I_k\neq\varnothing,\\
1, & \text{otherwise}.
\end{cases}
\label{eq:credibility_global}
\end{equation}
}

\subsection{Credibility-Weighted PPO}
\label{subsec:cw_ppo}

The value-learning component of the proposed framework is referred to as
Credibility-Weighted Value Learning (CWVL). It is implemented within PPO
by replacing the standard value loss with a credibility-weighted
heteroscedastic critic loss. 
PPO algorithms typically employ a homoscedastic mean square error (MSE) loss for value estimation~\cite{Schulman2017PPO}, implicitly assuming that all state inputs are equally reliable. However, in our POMDP setting, the belief state is represented by Kalman-filter outputs whose credibility varies with noise-statistics mismatch and unmodeled dynamics. Regressing the critic on targets generated from unreliable beliefs may therefore bias value learning. To reduce this effect, we propose a Credibility-Aware Critic that combines heteroscedastic value~\cite{kendall2017what} modeling with credibility-based reweighting, using the trust factor $t_k$ to attenuate the contribution of unreliable samples.

The critic network is designed to approximate the belief-value 
function, which represents the expected cumulative discounted reward 
from the current belief state:
\begin{equation}
    V^\pi(\mathbf{b}_k) = \mathbb{E}_{\pi}\!\left[\mathcal{R}_k \mid \mathbf{b}_k\right],
    \label{eq:value_function_def}
\end{equation}
where $\mathbb{E}_{\pi}[\cdot]$ denotes the conditional expectation 
when the agent starts from the current belief $\mathbf{b}_k$ and follows policy 
$\pi$ thereafter. The expectation is taken with respect to the 
randomness of the environment dynamics and observations, which jointly 
determine the future rewards. The discounted return $\mathcal{R}_k$ is 
defined as
\begin{equation}
    \mathcal{R}_k \triangleq \sum_{n=0}^{\infty}\gamma^n r_{k+n},
    \label{eq:subsec_return_def}
\end{equation}
where $\gamma \in (0, 1)$ is the discount factor and $r_{k+n}$ is the 
immediate reward at step $k+n$.
Since the infinite-horizon return $\mathcal{R}_k$ is not directly
observable during training, we use a finite-horizon bootstrapped target
$\hat{\mathcal{R}}_k$ as the critic regression target. Computed from
sampled trajectories and bootstrapped value estimates,
$\hat{\mathcal{R}}_k$ provides a noisy estimate of
$V^{\pi}(\mathbf{b}_k)$ and is modeled as an observation corrupted by
belief-dependent target noise:
\begin{equation}
\label{eq:subsec_additive_noise}
\begin{split}
\hat{\mathcal{R}}_k &= V^{\pi}(\mathbf{b}_k)+\varepsilon_k, \\
\mathbb{E}[\varepsilon_k\mid \mathbf{b}_k] &= 0, \quad 
\mathrm{Var}(\varepsilon_k\mid \mathbf{b}_k) = \sigma^2(\mathbf{b}_k),
\end{split}
\end{equation}
where $\varepsilon_k$ is the residual noise with zero conditional mean, 
and $\sigma^2(\mathbf{b}_k)$ denotes the aleatoric variance arising from 
environment stochasticity and partial observability. Taking the 
conditional expectation of~\eqref{eq:subsec_additive_noise} on both 
sides yields $\mathbb{E}[\hat{\mathcal{R}}_k \mid \mathbf{b}_k] = V^{\pi}(\mathbf{b}_k)$, 
so the conditional mean of the bootstrapped target coincides with the 
true belief value. 

To capture this heteroscedastic target uncertainty, whose variance depends on the belief state, the critic is parameterized as a probabilistic network.
Specifically, the critic with parameters $\phi$ outputs a Gaussian predictive density:
\begin{equation}
    p(\hat{\mathcal{R}}_k \mid \mathbf{b}_k; \phi) = 
    \mathcal{N}\!\big(\hat{\mathcal{R}}_k;\, 
    \mu_\phi(\mathbf{b}_k),\, \sigma_\phi^2(\mathbf{b}_k)\big),
    \label{eq:gaussian_pdf}
\end{equation}
where $p(\hat{\mathcal{R}}_k \mid \mathbf{b}_k; \phi)$ is the critic predictive density and $\mathcal{N}(\cdot)$ denotes a univariate normal density. The mean head $\mu_\phi(\mathbf{b}_k)$ estimates the belief value $V^\pi(\mathbf{b}_k)$, whereas the variance head $\sigma_\phi^2(\mathbf{b}_k)$ estimates the belief-dependent variance of the bootstrapped target.

{
Maximizing the log-likelihood of~\eqref{eq:gaussian_pdf} yields the standard heteroscedastic negative log-likelihood (NLL) loss~\cite{kendall2017what}, in which learned predictive variance controls the weight assigned to each squared residual. Studies in deep RL and neural heteroscedastic regression have identified two related limitations of this mechanism: inaccurate variance estimates can distort sample weighting~\cite{Mai2022IVRL}, and variance-dependent gradient scaling can compromise mean fitting~\cite{Immer2023BayesianHeteroscedastic}. The resulting loss captures aleatoric target noise but treats all belief samples as equally credible.
}

This equal-credibility assumption is inadequate in our POMDP setting. The
belief state $\mathbf{b}_k$ is represented by Kalman-filter outputs whose assumed
error statistics may deviate from the physical system, so different
samples carry different levels of filter credibility. We therefore
formulate critic training as a credibility-weighted empirical loss, where
each per-sample loss is multiplied by a non-negative weight reflecting
the reliability of the corresponding belief. We use the global trust
factor $t_k\in(0,1]$ defined in Eq.~\eqref{eq:credibility_global} as this
weight. It is bounded, determined by the discrepancy between FMSE and TMSE, and
approaches one when the FMSE matrix is consistent with the online TMSE
estimate. For numerical stability during optimization, we parameterize
the variance head in log-space as
$s_\phi(\mathbf{b}_k)\triangleq\log\sigma_\phi^2(\mathbf{b}_k)$. Multiplying the
per-sample heteroscedastic NLL by the credibility weight $t_k$ yields
the credibility-weighted value loss at time step $k$:
\begin{equation}
\begin{split}
\mathcal{L}^{V}_k(\phi) = & \frac{1}{2} t_k \Big[ s_\phi(\mathbf{b}_k) \\
& + \exp\!\big(-s_\phi(\mathbf{b}_k)\big)
\big(\hat{\mathcal{R}}_k-\mu_\phi(\mathbf{b}_k)\big)^2 \Big].
\end{split}
\label{eq:subsec_cred_aware_loss}
\end{equation}
Calculating the gradient with respect to the mean prediction explicitly reveals the loss~mechanism:
\begin{equation}
\frac{\partial \mathcal{L}^{V}_k}{\partial \mu_\phi} = 
-\,t_k\,\exp\!\big(-s_\phi(\mathbf{b}_k)\big)\,
\big(\hat{\mathcal{R}}_k-\mu_\phi(\mathbf{b}_k)\big).
\label{eq:subsec_grad_attenuation}
\end{equation}
This formulation reveals two complementary attenuation mechanisms acting on the same update. The per-sample NLL weighting by $t_k$ attenuates gradients for both the mean and variance heads, reducing the influence of low-credibility beliefs on value prediction and uncertainty estimation. Specifically, $\exp(-s_\phi(\mathbf{b}_k))$ provides intrinsic attenuation along the learned aleatoric axis, whereas $t_k$ provides extrinsic attenuation along the filter-credibility axis. Samples from inconsistent filters are assigned lower weights during parameter updates, preventing the critic from overfitting to spurious targets caused by mismatch between the filter and the system. When $t_k \to 1$, the loss recovers standard heteroscedastic regression.

The actor network, parameterized by $\theta$, is optimized via the 
PPO-Clip objective:
\begin{equation}
\label{eq:subsec_ppo_clip}
\mathcal{L}^{\pi}_k(\theta) = \min\!\Big( 
\rho_k(\theta)\,\hat{\mathcal{A}}_k,\; 
\mathrm{clip}\big(\rho_k(\theta),1-\epsilon,1+\epsilon\big)\,\hat{\mathcal{A}}_k 
\Big),
\end{equation}
where $\rho_k(\theta) = \frac{\pi_\theta(\mathbf{a}_k\mid \mathbf{b}_k)}{\pi_{\theta_{\mathrm{old}}}(\mathbf{a}_k\mid \mathbf{b}_k)}$ 
is the probability ratio between the new and old policies, $\epsilon$ is 
the clipping coefficient that bounds the policy update step, and $\hat{\mathcal{A}}_k$ is the advantage estimate used in PPO. The bootstrapped target $\hat{\mathcal{R}}_k$ is used for the critic update in~\eqref{eq:subsec_cred_aware_loss}.

\section{Shielded and Rule-Aware Decision Making under Perceptual Uncertainty}
\label{section:shielded_decision}
{
Credibility-weighted learning reduces the influence of unreliable beliefs on
value estimation during training. At execution time, CoReCBF-QP corrects
policy actions using covariance- and recovery-aware safety constraints
together with a continuous COLREGs-aware reference. CoReCBF constructs the
safety constraint for each target ship from VO geometry expanded according to uncertainty
and terms for braking and turning recovery. The resulting per-target
constraints are incorporated into the QP, whose objective uses a bounded yaw
reference to promote starboard-side avoidance for head-on encounters under
COLREGs Rule~14 and give-way crossing encounters under Rule~15.
}

\subsection{{Covariance- and Recovery-Aware Control Barrier Function}}
\label{subsec:corecbf}

{
CoReCBF uses the posterior state estimates and covariance information from
Section~\ref{subsec:belief} to couple uncertainty-aware collision geometry
with maneuvering recovery. For each target ship
$i\in\{1,\ldots,N\}$, all kinematic quantities below are expressed in the
global horizontal frame. Consistent with Eq.~\eqref{eq:os_kinematics}, let
$\mathbf p_k^{\mathrm{os}}
=[p_{x,k}^{\mathrm{os}},p_{y,k}^{\mathrm{os}}]^{\mathsf T}$ and
$\mathbf v_k^{\mathrm{os}}$ denote the position and translational velocity of
the own ship. Let $\hat{\mathbf p}^{\mathrm{ts}}_{i,k}$ and
$\hat{\mathbf v}^{\mathrm{ts}}_{i,k}$ denote the posterior mean position and
velocity of target ship $i$. The relative position and velocity are defined as
}
\begin{equation}
{
\mathbf r_{i,k}
=\hat{\mathbf p}^{\mathrm{ts}}_{i,k}-\mathbf p_k^{\mathrm{os}},
\qquad
\mathbf v^{\mathrm{rel}}_{i,k}
=\mathbf v_k^{\mathrm{os}}-\hat{\mathbf v}^{\mathrm{ts}}_{i,k}.
}
\label{eq:corecbf_rel_state}
\end{equation}
{
Let $d_{i,k}=\|\mathbf r_{i,k}\|$ denote the separation distance. The
corresponding radial and transverse unit vectors are
$\hat{\mathbf r}_{i,k}=\mathbf r_{i,k}/d_{i,k}$ and
$\hat{\mathbf r}_{\perp,i,k}=
[-(\hat{\mathbf r}_{i,k})_y,(\hat{\mathbf r}_{i,k})_x]^{\mathsf T}$.
With the convention in~\eqref{eq:corecbf_rel_state}, a positive projection of
$\mathbf v^{\mathrm{rel}}_{i,k}$ onto $\hat{\mathbf r}_{i,k}$ indicates
closing motion.
}

{
The credibility analysis in Section~\ref{subsec:belief} characterizes belief
reliability through the discrepancy between the filter-reported FMSE and its
online TMSE estimate. CoReCBF transfers this information to the safety
geometry by constructing a covariance envelope that dominates both matrices:
}
\begin{equation}
{
\begin{aligned}
\mathbf P^{\mathrm{cred}}_{i,k|k}
&=\mathbf P^f_{i,k|k}
+\left[\hat{\mathbf P}^m_{i,k|k}
-\mathbf P^f_{i,k|k}\right]_+,\\
\mathbf\Sigma^{r,\mathrm{cred}}_{i,k}
&=\mathbf E_p\mathbf P^{\mathrm{cred}}_{i,k|k}\mathbf E_p^{\mathsf T}.
\end{aligned}
}
\label{eq:corecbf_credible_cov}
\end{equation}
{
where $[\cdot]_+$ denotes the positive-semidefinite part of a symmetric
matrix, and $\mathbf E_p$ selects the position components. With the localization error of the own ship
neglected, $\mathbf\Sigma^{r,\mathrm{cred}}_{i,k}$ is the
relative-position covariance. This construction satisfies
$\mathbf P^{\mathrm{cred}}_{i,k|k}\succeq\mathbf P^f_{i,k|k}$ and
$\mathbf P^{\mathrm{cred}}_{i,k|k}\succeq\hat{\mathbf P}^m_{i,k|k}$.
}

{
For confidence level $1-\varepsilon_c$, let
$\zeta=\sqrt{\chi^2_{2,1-\varepsilon_c}}$. The credible relative-position
error set is
}
\begin{equation}
{
\mathcal E^{r,\mathrm{cred}}_{i,k}
=\left\{\mathbf e\in\mathbb R^2:
\mathbf e^{\mathsf T}
(\mathbf\Sigma^{r,\mathrm{cred}}_{i,k})^{-1}\mathbf e
\le\zeta^2\right\}.
}
\label{eq:corecbf_confidence_set}
\end{equation}
{
Maximizing the directional projection over the ellipsoidal confidence set
gives
}
\begin{equation}
{
\begin{aligned}
e^{\max}_{\parallel,i,k}
&=\zeta\sqrt{
\hat{\mathbf r}_{i,k}^{\mathsf T}
\mathbf\Sigma^{r,\mathrm{cred}}_{i,k}
\hat{\mathbf r}_{i,k}},\\
e^{\max}_{\perp,i,k}
&=\zeta\sqrt{
\hat{\mathbf r}_{\perp,i,k}^{\mathsf T}
\mathbf\Sigma^{r,\mathrm{cred}}_{i,k}
\hat{\mathbf r}_{\perp,i,k}},\\
d^{\min}_{i,k}
&=d_{i,k}-e^{\max}_{\parallel,i,k},\\
\alpha_{i,k}
&=\arctan\!\left(
\frac{e^{\max}_{\perp,i,k}}{d^{\min}_{i,k}}
\right)
+\arcsin\!\left(
\frac{R_i}{d^{\min}_{i,k}}
\right).
\end{aligned}
}
\label{eq:corecbf_credible_geometry}
\end{equation}
{
The construction applies when $d^{\min}_{i,k}>R_i$ and
$0<\alpha_{i,k}<\pi/2$. At decision step $k$, define
$\chi_{i,k}=(d_{i,k}^2-R_i^2)/R_i^2$ and map the expanded half-angle to the
geometry scale
}
\begin{equation}
{
\lambda_{i,k}
=\frac{\cot^2\alpha_{i,k}}{\chi_{i,k}}.
}
\label{eq:corecbf_geometry_scale}
\end{equation}
{
The credible half-angle is no smaller than the deterministic tangent
half-angle, so $0<\lambda_{i,k}\le1$. In the zero-uncertainty limit,
$\lambda_{i,k}=1$. Moreover,
$\lambda_{i,k}\chi_{i,k}=\cot^2\alpha_{i,k}$, which makes the asymptotes
introduced below coincide with the VO boundary expanded according to uncertainty.
}

{
Under the constant-relative-velocity assumption, the VO method assigns every
velocity inside the collision cone to the forbidden set because its projected
trajectory intersects the protected region. Covariance inflation accounts for
position uncertainty by widening this cone, but retains the same binary
classification over an unbounded prediction horizon. A velocity may therefore
be rejected even when the current separation is large or the own ship can
brake or turn out of the cone before exhausting the safety margin. CoReCBF
retains the covariance-expanded VO boundary as its asymptotic geometry and
introduces distance-dependent surge and turning recovery to distinguish such
recoverable velocities.
}

{
At decision step $k$, the geometry scale $\lambda_{i,k}$ is evaluated from
Eq.~\eqref{eq:corecbf_geometry_scale} and held fixed until the next decision
step. Let $R_i$ denote the combined safety radius. The clearance and the LOS
components of relative velocity are
}
\begin{equation}
{
\begin{aligned}
\Delta_i&=d_i-R_i,
&
\dot{\Delta}_i&=-v_{\parallel,i},\\
v_{\parallel,i}
&=\hat{\mathbf r}_i^{\mathsf T}\mathbf v_i^{\mathrm{rel}},
&
v_{\perp,i}
&=\hat{\mathbf r}_{\perp,i}^{\mathsf T}\mathbf v_i^{\mathrm{rel}}.
\end{aligned}
}
\label{eq:corecbf_clearance_dynamics}
\end{equation}

{
Let $\mathbf E=[\mathbf I_2\ \mathbf 0]$ select the horizontal translation,
and let $\mathbf e_1=[1,0,0]^{\mathsf T}$. The unit inertial-plane
acceleration direction generated by the surge input and its LOS projection
are
}
\begin{equation}
{
\mathbf h_u=
\frac{\mathbf E\mathbf R(\psi^{\mathrm{os}})
\mathbf M^{-1}\mathbf e_1}
{\|\mathbf E\mathbf R(\psi^{\mathrm{os}})
\mathbf M^{-1}\mathbf e_1\|},
\qquad
q_i=\hat{\mathbf r}_i^{\mathsf T}\mathbf h_u.
}
\label{eq:corecbf_surge_authority}
\end{equation}
{
Let $a_u>0$ be the available surge-deceleration scale. The stopping-distance
relation $v^2=2as$ gives the recovery scale $2a_u\Delta_i$. Projecting the
available acceleration magnitude onto the LOS gives
$2a_u\Delta_i|q_i|$. CoReCBF uses the differentiable weight $q_i^2$, which
satisfies $q_i^2\le|q_i|$, and obtains the surge-recovery budget
$2a_u\Delta_iq_i^2$.
}

{
Let $\alpha_r>0$ be the yaw-acceleration scale and
$\sigma_i\in\{-1,1\}$ the selected turning branch. Under nominal constant
angular acceleration, the time required to rotate the heading by $\pi/2$
satisfies
}
\begin{equation}
{
\begin{aligned}
\sigma_i r^{\mathrm{os}}T_{\sigma_i}
+\frac{1}{2}\alpha_rT_{\sigma_i}^2
&=\frac{\pi}{2},\\
T_{\sigma_i}
&=\frac{\sqrt{(r^{\mathrm{os}})^2+\pi\alpha_r}
-\sigma_i r^{\mathrm{os}}}{\alpha_r}>0.
\end{aligned}
}
\label{eq:corecbf_turn_time}
\end{equation}
{
A quarter turn maps a radial closing direction to a tangential direction, so
$T_{\sigma_i}$ defines the turning-response time scale. For
$\Delta_i>0$ and $v_{\parallel,i}>0$, the closing-time condition
}
\begin{equation}
{
\frac{\Delta_i}{v_{\parallel,i}}\ge T_{\sigma_i}
\quad\Longleftrightarrow\quad
v_{\parallel,i}^2
\le\frac{\Delta_i^2}{T_{\sigma_i}^2}
}
\label{eq:corecbf_turn_recovery}
\end{equation}
{
gives the turning-recovery budget. Its signed extension
$\Delta_i|\Delta_i|/T_{\sigma_i}^2$ changes sign across $d_i=R_i$ while
retaining a continuous first derivative.
}

{
Accordingly, the current geometry quantity
$\chi_i=(d_i^2-R_i^2)/R_i^2$ varies with the separation $d_i$, while
$\lambda_{i,k}$ remains fixed over the decision interval. The CoReCBF for
target ship $i$ is
}
\begin{equation}
{
\begin{aligned}
H_{i,k}={}&
\lambda_{i,k}\chi_i v_{\perp,i}^2
-[v_{\parallel,i}]_+^2\\
&+2a_u\Delta_i q_i^2
+\frac{\Delta_i|\Delta_i|}{T_{\sigma_i}^2}.
\end{aligned}
}
\label{eq:corecbf_barrier}
\end{equation}
{
The first two terms retain the expanded VO direction and closing motion. The
last two terms account for surge and turning recovery. Every term has units
of squared velocity. The functions $[z]_+^2$ and $z|z|$ have continuous
first derivatives. Therefore, $H_{i,k}$ is differentiable, and the Lie
derivatives used below are well defined for $d_i>0$.
}

{
\begin{proposition}
\label{prop:corecbf_distance_safety}
For $d_i>0$ and $\lambda_{i,k}>0$, the CoReCBF satisfies
\begin{equation}
H_{i,k}\ge0\Longrightarrow d_i\ge R_i.
\label{eq:corecbf_safety_implication}
\end{equation}
\end{proposition}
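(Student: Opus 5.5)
We argue by contraposition: assume $0<d_i<R_i$ and show $H_{i,k}<0$. The idea is that each of the four terms in~\eqref{eq:corecbf_barrier} either changes sign with the clearance $\Delta_i=d_i-R_i$ or is nonpositive. So once the own ship is inside the protected radius, no favorable relative velocity can make the barrier nonnegative.

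\textbf{Step 1.} If $d_i<R_i$, then $d_i^2-R_i^2<0$, so $\chi_i=(d_i^2-R_i^2)/R_i^2<0$. Since $\lambda_{i,k}>0$ and $v_{\perp,i}^2\ge0$, the first term satisfies $\lambda_{i,k}\chi_i v_{\perp,i}^2\le0$.

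\textbf{Step 2.} The closing term $-[v_{\parallel,i}]_+^2$ is always nonpositive.

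\textbf{Step 3.} We have $\Delta_i<0$. The surge-recovery term therefore satisfies $2a_u\Delta_i q_i^2\le0$, because $a_u>0$ and $q_i^2\ge0$.

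\textbf{Step 4.} The turning term is $\Delta_i|\Delta_i|/T_{\sigma_i}^2=-\Delta_i^2/T_{\sigma_i}^2$. By~\eqref{eq:corecbf_turn_time}, $T_{\sigma_i}>0$ is finite. Since $\Delta_i\neq0$, this term is strictly negative.

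Summing the four terms gives $H_{i,k}\le -\Delta_i^2/T_{\sigma_i}^2<0$, which contradicts $H_{i,k}\ge0$. Hence $H_{i,k}\ge0$ forces $d_i\ge R_i$.

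The only step needing care is the strict negativity in Step 4, which is what excludes equality. Two facts must be confirmed there. First, $T_{\sigma_i}$ is finite and strictly positive for every $r^{\mathrm{os}}$. This holds because $\alpha_r>0$ gives $\sqrt{(r^{\mathrm{os}})^2+\pi\alpha_r}>|r^{\mathrm{os}}|\ge\sigma_i r^{\mathrm{os}}$. Second, $\lambda_{i,k}$ is held fixed from the decision step and is positive by hypothesis. Its sign therefore does not depend on the current $\chi_i$, even though $\chi_{i,k}$ was used to define it. The hypothesis $d_i>0$ is needed only so that $\hat{\mathbf r}_i$, and hence $v_{\parallel,i}$, $v_{\perp,i}$ and $q_i$, are well defined.
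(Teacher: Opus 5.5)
Your proposal is correct and matches the paper's proof: you argue by contraposition on $0<d_i<R_i$, showing the first three terms of $H_{i,k}$ are nonpositive and the signed turning term is strictly negative, and you additionally check that $T_{\sigma_i}$ is finite and positive, which the paper leaves implicit. The paper also notes that at $d_i=R_i$ one has $H_{i,k}=-[v_{\parallel,i}]_+^2$, so $H_{i,k}\ge0$ there forces $v_{\parallel,i}\le0$, but this side remark is not needed for the implication.
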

\begin{IEEEproof}
If $0<d_i<R_i$, then $\chi_i<0$ and $\Delta_i<0$. The first three terms
of~\eqref{eq:corecbf_barrier} are nonpositive, while the signed turning term
is negative. Hence, $H_{i,k}<0$, which proves the implication by
contraposition. At $d_i=R_i$,
$H_{i,k}=-[v_{\parallel,i}]_+^2$, so $H_{i,k}\ge0$ requires
$v_{\parallel,i}\le0$.
\end{IEEEproof}
}

{
For the velocity space interpretation at decision step $k$, fix the geometry
at its sampled value, so that $d_i=d_{i,k}>R_i$ and
$\chi_i=\chi_{i,k}$, and define
}
\begin{equation}
{
B_{i,k}
=2a_u\Delta_iq_i^2
+\frac{\Delta_i^2}{T_{\sigma_i}^2}>0.
}
\label{eq:corecbf_recovery_budget}
\end{equation}
{
In the approaching half-plane $v_{\parallel,i}>0$, the boundary
$H_{i,k}=0$ becomes
}
\begin{equation}
{
v_{\parallel,i}^2
-\lambda_{i,k}\chi_{i,k}v_{\perp,i}^2
=B_{i,k}.
}
\label{eq:corecbf_hyperbola}
\end{equation}
{
This is a hyperbola with asymptotes
}
\begin{equation}
{
v_{\parallel,i}
=\cot\alpha_{i,k}|v_{\perp,i}|,
\qquad
\frac{|v_{\perp,i}|}{v_{\parallel,i}}
=\tan\alpha_{i,k}.
}
\label{eq:corecbf_asymptotes}
\end{equation}
{
Thus, its asymptotes reproduce the boundary of the covariance-expanded VO
cone. Define the corresponding VO and CoReCBF forbidden sets as
}
\begin{equation}
{
\begin{aligned}
\mathcal V^{\mathrm{VO}}_{i,k}(\alpha_{i,k})
&=\left\{(v_{\parallel,i},v_{\perp,i}):
v_{\parallel,i}>0,\ 
v_{\parallel,i}^2
\ge\lambda_{i,k}\chi_{i,k}v_{\perp,i}^2\right\},\\
\mathcal V^-_{H,i,k}
&=\left\{(v_{\parallel,i},v_{\perp,i}):H_{i,k}<0\right\}.
\end{aligned}
}
\label{eq:corecbf_forbidden_sets}
\end{equation}
{
\begin{proposition}
For $d_i>R_i$,
$\mathcal V^-_{H,i,k}\subsetneq
\mathcal V^{\mathrm{VO}}_{i,k}(\alpha_{i,k})$.
\end{proposition}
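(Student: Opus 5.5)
We work at the sampled geometry, as in the velocity-space interpretation preceding the statement. Fix $d_i=d_{i,k}>R_i$, so $\chi_i=\chi_{i,k}>0$ and $\Delta_i>0$. Then $\Delta_i|\Delta_i|=\Delta_i^2$, and $H_{i,k}$ takes the form
\[
H_{i,k}=\lambda_{i,k}\chi_{i,k}v_{\perp,i}^2-[v_{\parallel,i}]_+^2+B_{i,k},
\]
with $B_{i,k}>0$ as in \eqref{eq:corecbf_recovery_budget}. Positivity of $B_{i,k}$ holds because $2a_u\Delta_iq_i^2\ge0$ and $\Delta_i^2/T_{\sigma_i}^2>0$, using $T_{\sigma_i}>0$ from \eqref{eq:corecbf_turn_time}. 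The proof then has two parts: the inclusion $\mathcal V^-_{H,i,k}\subseteq\mathcal V^{\mathrm{VO}}_{i,k}(\alpha_{i,k})$, and the existence of a point of the VO set where $H_{i,k}\ge0$.

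\emph{Inclusion.} Take $(v_{\parallel,i},v_{\perp,i})$ with $H_{i,k}<0$.

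First suppose $v_{\parallel,i}\le0$. Then $[v_{\parallel,i}]_+^2=0$, so $H_{i,k}=\lambda_{i,k}\chi_{i,k}v_{\perp,i}^2+B_{i,k}$. This is strictly positive because $\lambda_{i,k}>0$ by \eqref{eq:corecbf_geometry_scale} (with $0<\alpha_{i,k}<\pi/2$) and $\chi_{i,k}>0$. That contradicts $H_{i,k}<0$, so $v_{\parallel,i}>0$.

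With $v_{\parallel,i}>0$, the condition $H_{i,k}<0$ reads $v_{\parallel,i}^2>\lambda_{i,k}\chi_{i,k}v_{\perp,i}^2+B_{i,k}$. Since $B_{i,k}>0$, this gives $v_{\parallel,i}^2>\lambda_{i,k}\chi_{i,k}v_{\perp,i}^2$. The point therefore satisfies both defining conditions of $\mathcal V^{\mathrm{VO}}_{i,k}(\alpha_{i,k})$ in \eqref{eq:corecbf_forbidden_sets}.

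\emph{Strictness.} We exhibit a VO velocity that is not forbidden by CoReCBF. Take $v_{\perp,i}=0$ and $v_{\parallel,i}=\sqrt{B_{i,k}}>0$. This point lies in $\mathcal V^{\mathrm{VO}}_{i,k}$, since $v_{\parallel,i}>0$ and $v_{\parallel,i}^2\ge0$. At this point $H_{i,k}=-B_{i,k}+B_{i,k}=0$, so it is not in $\mathcal V^-_{H,i,k}$.

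More generally, every point strictly between the asymptotic cone and the hyperbola \eqref{eq:corecbf_hyperbola} is a witness. These are the points with $0\le v_{\parallel,i}^2-\lambda_{i,k}\chi_{i,k}v_{\perp,i}^2\le B_{i,k}$ and $v_{\parallel,i}>0$. This band is exactly the set of recoverable velocities that motivate the construction.

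\emph{Main obstacle.} The only delicate step is the sign bookkeeping that yields $B_{i,k}>0$ and $\lambda_{i,k}\chi_{i,k}>0$. This requires $d_i>R_i$, $T_{\sigma_i}>0$, and the standing assumptions $d^{\min}_{i,k}>R_i$ and $0<\alpha_{i,k}<\pi/2$, under which $\lambda_{i,k}$ is well defined and positive. Once these hold, both parts are direct comparisons.
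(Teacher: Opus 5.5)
Your proof is correct and follows the paper's route: rearrange $H_{i,k}<0$ and use $B_{i,k}>0$ to get the inclusion, then exhibit a velocity in $\mathcal V^{\mathrm{VO}}_{i,k}(\alpha_{i,k})$ with $H_{i,k}\ge0$ to get strictness. You additionally make explicit that $H_{i,k}<0$ forces $v_{\parallel,i}>0$, which the paper leaves implicit. Your witness is the hyperbola vertex $v_{\perp,i}=0$, $v_{\parallel,i}=\sqrt{B_{i,k}}$ with $H_{i,k}=0$, whereas the paper uses points on the VO boundary, where $H_{i,k}=B_{i,k}>0$.
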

\begin{IEEEproof}
If $H_{i,k}<0$, then
$v_{\parallel,i}^2>
\lambda_{i,k}\chi_{i,k}v_{\perp,i}^2+B_{i,k}>
\lambda_{i,k}\chi_{i,k}v_{\perp,i}^2$.
The VO boundary satisfies $H_{i,k}=B_{i,k}>0$, so the inclusion is strict.
\end{IEEEproof}
Consequently, CoReCBF does not classify every velocity inside the
covariance-expanded VO cone as forbidden. A cone-interior velocity remains
admissible when the available clearance and maneuvering authority are
sufficient to prevent $d_i$ from reaching $R_i$ through surge braking,
turning, or their combined action.
Figure~\ref{fig:corecbf_geometry_velocity} summarizes the credible angular
expansion and the corresponding CoReCBF boundary in relative-velocity space.
}

{
To expose both control channels, let
$\mathbf e_3=[0,0,1]^{\mathsf T}$,
$\mathbf B_a=[\mathbf e_1\ \mathbf e_3]$, and
}
\begin{equation}
{
\begin{aligned}
\mathbf f_\nu
&=-\mathbf M^{-1}
\left[\mathbf C(\boldsymbol\nu^{\mathrm{os}})
\boldsymbol\nu^{\mathrm{os}}
+\mathbf D\boldsymbol\nu^{\mathrm{os}}\right],\\
\dot{\boldsymbol\nu}^{\mathrm{os}}
&=\mathbf f_\nu+\mathbf M^{-1}\mathbf B_a\mathbf a.
\end{aligned}
}
\label{eq:corecbf_control_affine}
\end{equation}
{
For constant velocity of the target ship, with
$\mathbf J=\begin{bmatrix}0&-1\\1&0\end{bmatrix}$,
}
\begin{equation}
{
\begin{aligned}
L_f\mathbf v_i^{\mathrm{rel}}
&=\mathbf E\mathbf R(\psi^{\mathrm{os}})\mathbf f_\nu\\
&\quad+r^{\mathrm{os}}\mathbf J\mathbf E
\mathbf R(\psi^{\mathrm{os}})\boldsymbol\nu^{\mathrm{os}},\\
L_g\mathbf v_i^{\mathrm{rel}}
&=\mathbf E\mathbf R(\psi^{\mathrm{os}})
\mathbf M^{-1}\mathbf B_a,\\
L_fr^{\mathrm{os}}
&=\mathbf e_3^{\mathsf T}\mathbf f_\nu,
&
L_gr^{\mathrm{os}}
&=\mathbf e_3^{\mathsf T}\mathbf M^{-1}\mathbf B_a.
\end{aligned}
}
\label{eq:corecbf_relative_dynamics}
\end{equation}
{
The first and second columns of the input Lie derivatives correspond to the
surge thrust $\tau_u$ and yaw moment $\tau_r$. Holding $\lambda_{i,k}$ and
the selected turning branch fixed between decision steps, define
}
\begin{equation}
{
\begin{aligned}
q_{\perp,i}
&=\hat{\mathbf r}_{\perp,i}^{\mathsf T}\mathbf h_u,\\
\mathbf A_{i,k}
&=2\left(
\lambda_{i,k}\chi_i v_{\perp,i}\hat{\mathbf r}_{\perp,i}
-[v_{\parallel,i}]_+\hat{\mathbf r}_i
\right),\\
c_i
&=\frac{2\sigma_i\Delta_i|\Delta_i|}
{T_{\sigma_i}^2
\sqrt{(r^{\mathrm{os}})^2+\pi\alpha_r}}.
\end{aligned}
}
\label{eq:corecbf_lie_abbreviations}
\end{equation}
{
The Lie derivatives of~\eqref{eq:corecbf_barrier} are
}
\begin{equation}
{
\begin{aligned}
L_fH_{i,k}={}&
\frac{2\left([v_{\parallel,i}]_+
-\lambda_{i,k}v_{\parallel,i}\right)v_{\perp,i}^2}{d_i}
+\mathbf A_{i,k}^{\mathsf T}L_f\mathbf v_i^{\mathrm{rel}}\\
&-2a_uv_{\parallel,i}q_i^2
-4a_u\Delta_iq_iq_{\perp,i}
\left(r^{\mathrm{os}}+\frac{v_{\perp,i}}{d_i}\right)\\
&-\frac{2|\Delta_i|v_{\parallel,i}}{T_{\sigma_i}^2}
+c_iL_fr^{\mathrm{os}},\\
L_gH_{i,k}={}&
\mathbf A_{i,k}^{\mathsf T}L_g\mathbf v_i^{\mathrm{rel}}
+c_iL_gr^{\mathrm{os}}
=\begin{bmatrix}L_{g_u}H_{i,k}&L_{g_r}H_{i,k}\end{bmatrix}.
\end{aligned}
}
\label{eq:corecbf_lie_derivatives}
\end{equation}
{
Hence,
}
\begin{equation}
{
\dot H_{i,k}
=L_fH_{i,k}
+L_{g_u}H_{i,k}\tau_u
+L_{g_r}H_{i,k}\tau_r.
}
\label{eq:corecbf_control_channels}
\end{equation}
{
Both generalized inputs therefore enter one affine safety constraint. With
$k_H>0$, the CoReCBF condition for target ship $i$ is
}
\begin{equation}
{
L_fH_{i,k}
+L_{g_u}H_{i,k}\tau_u
+L_{g_r}H_{i,k}\tau_r
+k_HH_{i,k}\ge0.
}
\label{eq:corecbf_condition}
\end{equation}

{
\begin{assumption}
Fix a target ship $i$, and let
$0=\vartheta_0<\vartheta_1<\cdots$ denote the CoReCBF update instants.
Let $\mathcal X_i$ be an open state domain with $d_i>0$.

\textup{(i)} At every update instant, the covariance-aware geometry satisfies
\[
d^{\min}_{i,k}>R_i,
\qquad
0<\alpha_{i,k}<\frac{\pi}{2}.
\]

\textup{(ii)} The parameters $\lambda_{i,k}$ and $\sigma_i$ remain fixed on
$[\vartheta_k,\vartheta_{k+1})$. For $k\ge1$, the barrier constructed from
the updated geometry satisfies
\[
H_{i,k}(\vartheta_k)\ge0.
\]

\textup{(iii)} For every state in $\mathcal X_i$ satisfying
$H_{i,k}\ge0$, there exists an action $\mathbf a\in\mathcal A$ such that
\[
L_fH_{i,k}+L_gH_{i,k}\mathbf a+k_HH_{i,k}\ge0.
\]

\textup{(iv)} The feedback selected by the CoReCBF-QP for a single target ship is
locally Lipschitz on $\mathcal X_i$.

\textup{(v)} The initial state belongs to $\mathcal X_i$ and satisfies
\[
H_{i,0}(\vartheta_0)\ge0.
\]
\end{assumption}
}

{
\begin{theorem}[Forward invariance and distance safety for one target ship]
Under Assumption~1, the closed-loop trajectory in $\mathcal X_i$ generated
by the CoReCBF-QP for a single target ship satisfies
\begin{equation}
H_{i,k}(t)\ge0,
\qquad
d_i(t)\ge R_i,
\qquad
t\in[\vartheta_k,\vartheta_{k+1}),\quad k\ge0.
\end{equation}
\end{theorem}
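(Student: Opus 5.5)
The argument works interval by interval: a standard CBF forward-invariance argument on each update interval $[\vartheta_k,\vartheta_{k+1})$, chained across the update instants with Assumption~1(ii) and (v), and then converted into distance safety through Proposition~\ref{prop:corecbf_distance_safety}.

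\emph{Well-posedness on one interval.} Fix $k$ and the interval $[\vartheta_k,\vartheta_{k+1})$. On this interval $\lambda_{i,k}$ and $\sigma_i$ are frozen by Assumption~1(ii), so $H_{i,k}$ is a fixed, time-invariant function of the continuous state. It is $C^1$ on $\mathcal X_i$, because $[z]_+^2$ and $z|z|$ are $C^1$, $d_i>0$ on $\mathcal X_i$, and $T_{\sigma_i}$ is smooth in $r^{\mathrm{os}}$ since $\pi\alpha_r>0$. By Assumption~1(iii), the constraint~\eqref{eq:corecbf_condition} is feasible, together with the bounds~\eqref{eq:action_bounds}, at every state of $\mathcal X_i$ with $H_{i,k}\ge0$. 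By Assumption~1(iv), the QP feedback is locally Lipschitz. The closed-loop vector field, built from~\eqref{eq:os_kinematics}, \eqref{eq:os_dynamics} and the constant-velocity target, is therefore locally Lipschitz. This gives a unique absolutely continuous solution on the interval while it stays in $\mathcal X_i$, and the theorem statement already restricts attention to trajectories in $\mathcal X_i$.

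\emph{Invariance on the interval.} Along that solution, \eqref{eq:corecbf_control_channels} and the enforced constraint give
\[
\dot H_{i,k}\ge -k_H H_{i,k}
\]
wherever the constraint is active, that is, wherever $H_{i,k}\ge0$. To handle a possible exit rigorously, argue by contradiction. Suppose $H_{i,k}(t_1)<0$ for some $t_1$ in the interval. Let $t_0=\sup\{t\le t_1: H_{i,k}(t)\ge0\}$; by continuity $H_{i,k}(t_0)=0$. Just after $t_0$ the state lies outside $\{H_{i,k}\ge0\}$, so the comparison inequality cannot be invoked there directly.

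This is the main obstacle. My first approach is to note that the QP in the paper imposes~\eqref{eq:corecbf_condition} at every state. This makes sense in practice because the condition is defined for all $d_i>0$, and by the Lipschitz continuity of the feedback the inequality $\dot H_{i,k}\ge -k_HH_{i,k}$ holds on a neighbourhood of $t_0$. If feasibility is only guaranteed on the safe set, I would instead use Nagumo's theorem. At boundary points, $\dot H_{i,k}\ge -k_H\cdot 0=0$, so the vector field points into or tangent to $\{H_{i,k}\ge0\}$. Together with uniqueness of solutions, this gives forward invariance of the closed set. A regularity caveat applies: where $\nabla H_{i,k}$ vanishes on the boundary, Nagumo still holds for the Lipschitz closed loop through the comparison-lemma form of the Brezis–Nagumo condition.

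In either route, the comparison lemma yields
\[
H_{i,k}(t)\ge H_{i,k}(\vartheta_k)\,e^{-k_H(t-\vartheta_k)}\ge0
\]
on the whole interval, provided $H_{i,k}(\vartheta_k)\ge0$.

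\emph{Chaining across updates.} For $k=0$, the initial condition $H_{i,0}(\vartheta_0)\ge0$ comes from Assumption~1(v). For $k\ge1$, at $\vartheta_k$ the barrier switches, because $\lambda_{i,k}$ and possibly $\sigma_i$ are recomputed. Its value at the switch is nonnegative by Assumption~1(ii), so no continuity across the switch is needed; the state itself is continuous, and that suffices. Assumption~1(i) ensures that $\lambda_{i,k}\in(0,1]$ is well defined and positive at each update, which is exactly the hypothesis $\lambda_{i,k}>0$ required by Proposition~\ref{prop:corecbf_distance_safety}. Induction on $k$ then gives $H_{i,k}(t)\ge0$ on every interval. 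Finally, since $d_i>0$ on $\mathcal X_i$ and $\lambda_{i,k}>0$, Proposition~\ref{prop:corecbf_distance_safety} converts this into $d_i(t)\ge R_i$ for all $t\in[\vartheta_k,\vartheta_{k+1})$ and all $k\ge0$.
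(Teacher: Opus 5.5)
Your proof is correct and follows essentially the paper's argument: on each update interval the enforced constraint~\eqref{eq:corecbf_condition} gives $\frac{\mathrm d}{\mathrm dt}\bigl[e^{k_H(t-\vartheta_k)}H_{i,k}(t)\bigr]\ge0$, Assumption~1(v) and~1(ii) supply nonnegativity at the start of each interval, and Proposition~\ref{prop:corecbf_distance_safety} converts $H_{i,k}\ge0$ into $d_i\ge R_i$. Your well-posedness and exit-time discussion is more careful than the paper, which simply takes the QP constraint to hold at every state of $\mathcal X_i$ (your first route); only your Nagumo fallback is shaky at boundary points where $\nabla H_{i,k}=0$, because there $\dot H_{i,k}\ge0$ holds trivially and does not by itself imply invariance, but you do not need that route.
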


\begin{IEEEproof}
On $[\vartheta_k,\vartheta_{k+1})$, Assumption~1(ii) fixes the updated
parameters of $H_{i,k}$. The control action selected by the QP satisfies
\eqref{eq:corecbf_condition}, and hence
\begin{equation}
\dot H_{i,k}(t)+k_HH_{i,k}(t)\ge0.
\end{equation}
It follows that
\begin{equation}
\frac{\mathrm d}{\mathrm dt}
\left[
e^{k_H(t-\vartheta_k)}H_{i,k}(t)
\right]\ge0,
\end{equation}
which gives
\begin{equation}
H_{i,k}(t)
\ge
e^{-k_H(t-\vartheta_k)}H_{i,k}(\vartheta_k)
\ge0.
\end{equation}
Assumption~1(v) establishes the result on the initial update interval, and
Assumption~1(ii) extends it successively to every subsequent interval.
Proposition~\ref{prop:corecbf_distance_safety} then yields
$d_i(t)\ge R_i$.
\end{IEEEproof}
}

\begin{figure*}[t]
  \centering
  \subfloat[Credibility-expanded VO geometry.]{\includegraphics[width=0.4\textwidth]{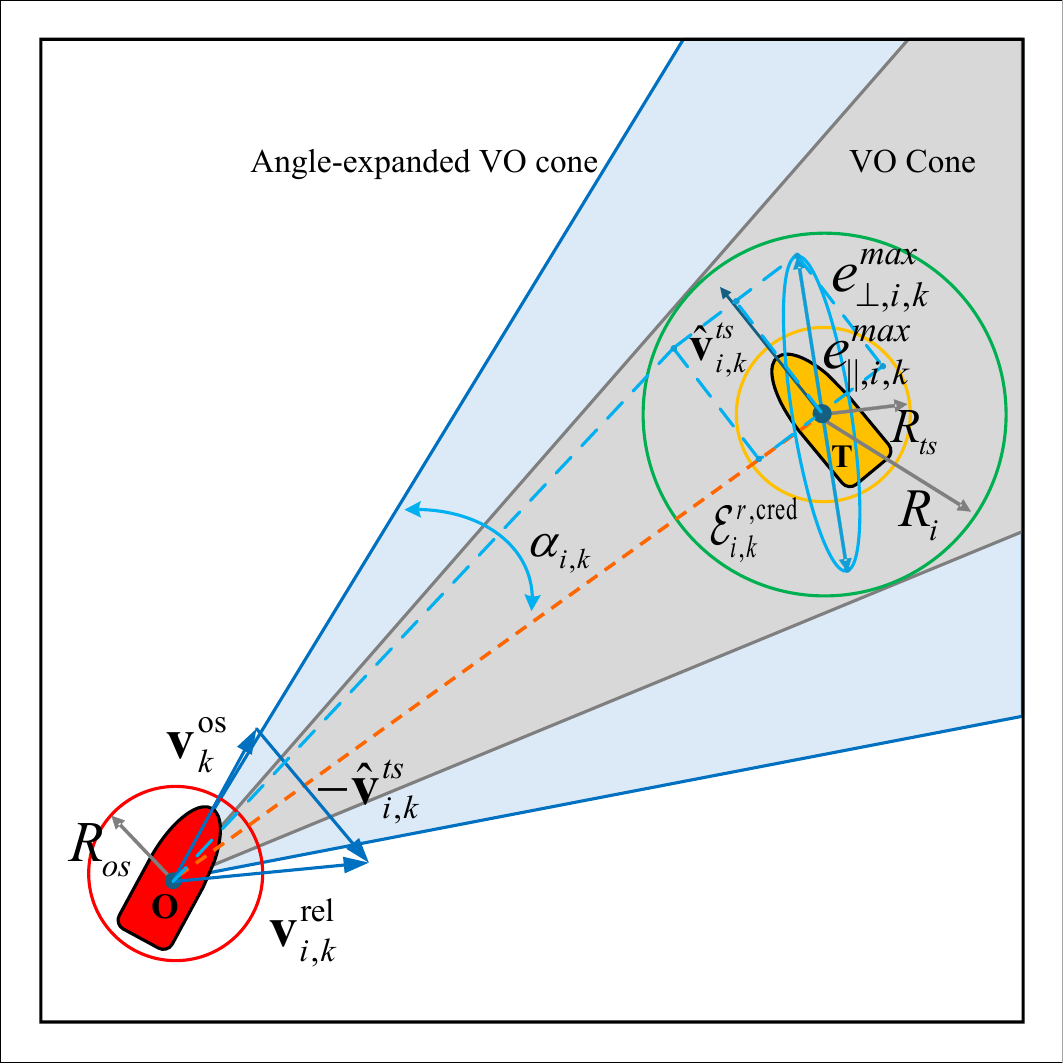}%
  \label{fig:corecbf_geometry}}
  \hfil
  \subfloat[CoReCBF boundary in relative-velocity space.]{\includegraphics[width=0.4\textwidth]{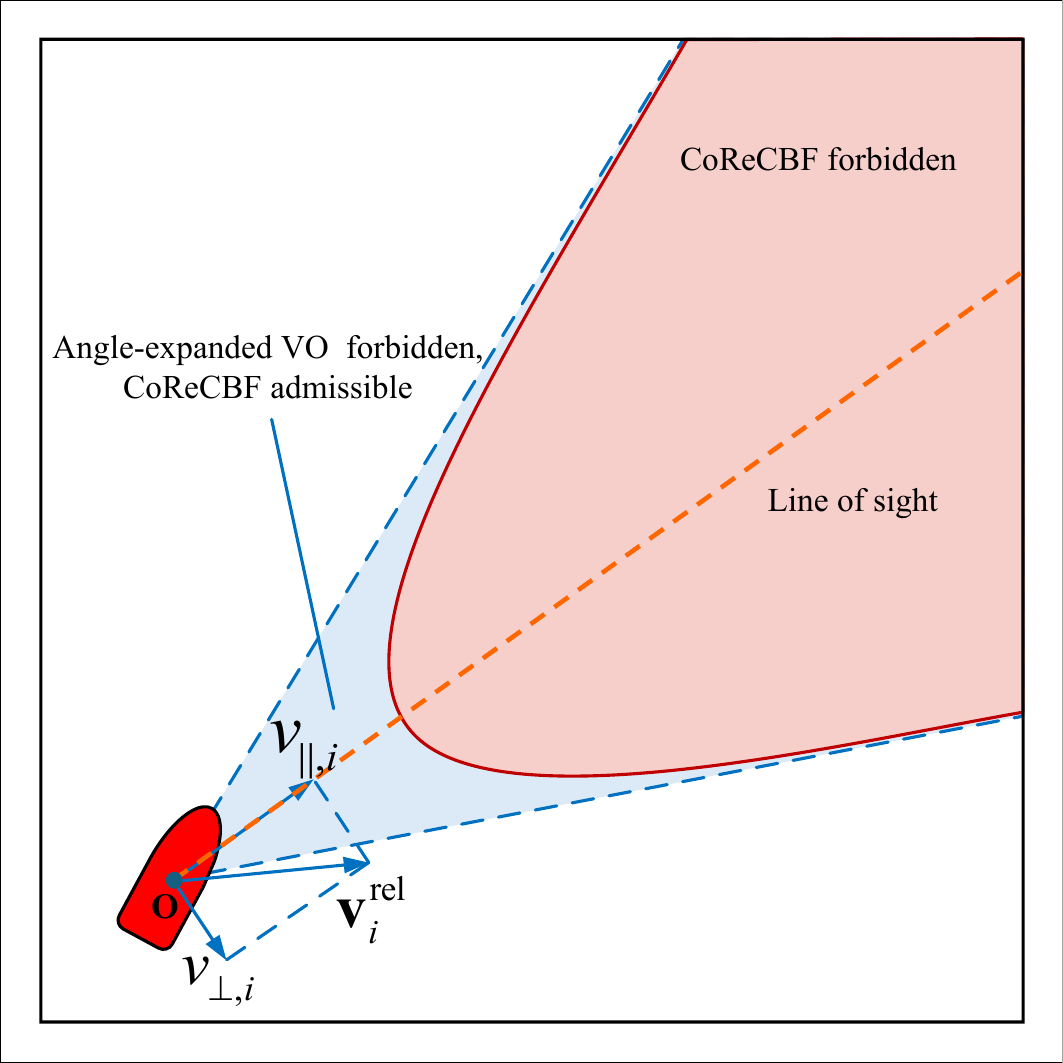}%
  \label{fig:corecbf_velspace}}
  \caption{Credibility-aware collision geometry and its CoReCBF interpretation. (a) The credible relative-position uncertainty expands the nominal VO half-angle to $\alpha_{i,k}$. (b) In relative-velocity coordinates, the blue dashed rays denote the asymptotes of the credibility-expanded VO cone, while the red curve denotes the CoReCBF boundary $H_{i,k}=0$. The light-blue region is forbidden by the expanded VO but remains admissible under CoReCBF, whereas the red region satisfies $H_{i,k}<0$.}
  \label{fig:corecbf_geometry_velocity}
\end{figure*}

\subsection{{CoReCBF-QP for Multiple Target Ships}}
\label{subsec:corecbf_qp}

{
To construct the continuous COLREGs-aware reference, we express the relative
position of each target ship in the encounter frame of the own ship. Its $x$-axis
points forward, and its $y$-axis points to port:
}
\begin{equation}
{
\begin{aligned}
\begin{bmatrix}x_{i,k}\\y_{i,k}\end{bmatrix}
&=
\begin{bmatrix}
\cos\psi_k^{\mathrm{os}}&\sin\psi_k^{\mathrm{os}}\\
-\sin\psi_k^{\mathrm{os}}&\cos\psi_k^{\mathrm{os}}
\end{bmatrix}
\mathbf r_{i,k},\\
\beta_{i,k}
&=\operatorname{atan2}(y_{i,k},x_{i,k})\in(-\pi,\pi].
\end{aligned}
}
\label{eq:colregs_bearing}
\end{equation}
{
Under this convention, $\beta_{i,k}<0$ indicates that the target ship is on
the starboard side, whereas $\beta_{i,k}>0$ indicates the port side. Assuming constant relative velocity and \(|v^{rel}_{i,k}|>0\), the time and distance at the closest point of
approach are
}
\begin{equation}
{
t_{i,k}^{\mathrm{CPA}}
=\frac{\mathbf r_{i,k}^{\mathsf T}
\mathbf v_{i,k}^{\mathrm{rel}}}
{\|\mathbf v_{i,k}^{\mathrm{rel}}\|^2},
\qquad
d_{i,k}^{\mathrm{CPA}}
=\left\|\mathbf r_{i,k}
-\mathbf v_{i,k}^{\mathrm{rel}}t_{i,k}^{\mathrm{CPA}}\right\|.
}
\label{eq:colregs_cpa}
\end{equation}
{
A positive
$t_{i,k}^{\mathrm{CPA}}$ indicates that the closest approach lies in the
future. We use two bearing boundaries to describe the encounter geometry.
The angle $\theta_h$ defines the half-width of the head-on region, whereas
$\theta_{\lim}$ sets the outer bearing limit of the Rule~15 give-way crossing
region, with $0<\theta_h<\theta_{\lim}<\pi$. The corresponding angular
activations are
}
\begin{equation}
{
\begin{aligned}
\Psi_{i,k}^{H}
&=S\!\left[\kappa_\beta
(\cos\beta_{i,k}-\cos\theta_h)\right],\\
\Psi_{i,k}^{SC}
&=S\!\left[\kappa_\beta(-\beta_{i,k}-\theta_h)\right]
S\!\left[\kappa_\beta(\beta_{i,k}+\theta_{\lim})\right],
\end{aligned}
}
\label{eq:colregs_memberships}
\end{equation}
{
where $S(z)=1/(1+\exp(-z))$, and $\kappa_\beta$ controls the angular
transition. The term $\Psi_{i,k}^{H}$ activates around the COLREGs Rule~14
head-on region. The term $\Psi_{i,k}^{SC}$ forms a smooth window over
$-\theta_{\lim}\le\beta_{i,k}\le-\theta_h$, corresponding to Rule~15
crossings in which the target ship lies on the own ship's starboard side.

We combine the angular activations with the predicted distance and timing of
the closest approach. The CPA relevance gate is
}
\begin{equation}
{
\begin{aligned}
G_{i,k}
&=\left[1-\frac{d_{i,k}^{\mathrm{CPA}}}{d_{\mathrm{safe}}}\right]_+
S\!\left(\kappa_t t_{i,k}^{\mathrm{CPA}}\right)\\
&\quad{}
S\!\left[\kappa_t
\left(T_{\mathrm{CPA}}-t_{i,k}^{\mathrm{CPA}}\right)\right],
\qquad \|\mathbf v_{i,k}^{\mathrm{rel}}\|>0.
\end{aligned}
}
\label{eq:colregs_risk_gate}
\end{equation}
{
Here, $d_{\mathrm{safe}}$ specifies the relevant CPA-distance range,
$T_{\mathrm{CPA}}$ defines the prediction horizon, and $\kappa_t$ controls
the temporal transitions at its boundaries. Thus, $G_{i,k}$ assigns greater
weight to encounters with a small predicted passing distance within the
prediction horizon. When $\|\mathbf v_{i,k}^{\mathrm{rel}}\|=0$, the CPA time
is not unique and $G_{i,k}$ is set to zero.
}

{
The reference activation induced by target ship $i$ is then
}
\begin{equation}
{
\Phi_{i,k}
=G_{i,k}
\left[1-(1-\Psi_{i,k}^{H})(1-\Psi_{i,k}^{SC})\right]
\in[0,1].
}
\label{eq:colregs_target_duty}
\end{equation}
{
The union term combines the Rule~14 and Rule~15 angular activations.
Consequently, $\Phi_{i,k}$ represents the strength of the starboard yaw
reference associated with target ship $i$, accounting jointly for encounter
geometry, predicted passing distance, and closest-approach time. The zero-gate
case also yields $\Phi_{i,k}=0$.

For multiple target ships, the per-target activations are combined using a
complement-product aggregation:
}
\begin{equation}
{
\bar\Phi_k
=1-\prod_{i=1}^{N}(1-\Phi_{i,k})
\in[0,1].
}
\label{eq:colregs_aggregate_duty}
\end{equation}
{
The aggregate $\bar\Phi_k$ is zero when no target ship activates the reference
and increases when one or more target ships produce stronger activations. It
therefore provides a bounded encounter-level intensity for constructing the
common starboard yaw reference in the QP for multiple target ships.
}

{
To construct this common yaw reference from the policy output, let
}
\begin{equation}
{
\mathbf a_k^{\mathrm{RL}}
=\begin{bmatrix}
\tau_{u,k}^{\mathrm{RL}}\\
\tau_{r,k}^{\mathrm{RL}}
\end{bmatrix}.
}
\label{eq:colregs_policy_action}
\end{equation}
{
From the 3-DOF control-affine dynamics in
Eq.~\eqref{eq:corecbf_control_affine}, the yaw component is
}
\begin{equation}
{
\begin{aligned}
\dot r^{\mathrm{os}}
&=\mathbf e_3^{\mathsf T}\dot{\boldsymbol\nu}^{\mathrm{os}}\\
&=\mathbf e_3^{\mathsf T}\mathbf f_\nu
+\mathbf e_3^{\mathsf T}\mathbf M^{-1}\mathbf e_1\,\tau_u
+\mathbf e_3^{\mathsf T}\mathbf M^{-1}\mathbf e_3\,\tau_r .
\end{aligned}
}
\label{eq:colregs_yaw_channel}
\end{equation}
{
For a yaw-channel increment at a fixed state and unchanged surge input,
subtracting the nominal relation gives
}
\begin{equation}
{
\Delta\dot r^{\mathrm{os}}
=b_r\Delta\tau_r,
\qquad
b_r:=\mathbf e_3^{\mathsf T}\mathbf M^{-1}\mathbf e_3>0.
}
\label{eq:colregs_yaw_increment}
\end{equation}
{
For the diagonal inertia matrix used in the simulations, $b_r=1/m_{33}$.
Using the yaw-acceleration scale $\alpha_r$ defined by the turning-recovery
model in Eq.~\eqref{eq:corecbf_turn_time}, we define the COLREGs-induced
yaw-acceleration increment as
}
\begin{equation}
{
\Delta\dot r_k^{\mathrm{ref}}
=-\eta_{\mathrm{ref}}\alpha_r\bar\Phi_k,
}
\label{eq:colregs_yaw_accel_reference}
\end{equation}
{
where $\eta_{\mathrm{ref}}\in[0,1]$ controls the reference strength and
$\tau_r<0$ denotes a starboard yaw moment. Substitution into the yaw-channel
relation gives
}
\begin{equation}
{
\begin{aligned}
b_r\Delta\tau_{r,k}^{\mathrm{ref}}
&=-\eta_{\mathrm{ref}}\alpha_r\bar\Phi_k,\\
\Delta\tau_{r,k}^{\mathrm{ref}}
&=-\frac{\eta_{\mathrm{ref}}\alpha_r\bar\Phi_k}{b_r}.
\end{aligned}
}
\label{eq:colregs_yaw_moment_reference}
\end{equation}
{
The resulting COLREGs-aware reference action is
}
\begin{equation}
{
\begin{aligned}
\mathbf a_k^{\mathrm{ref}}
&:=\mathbf a_k^{\mathrm{RL}}
+\begin{bmatrix}
0\\
\Delta\tau_{r,k}^{\mathrm{ref}}
\end{bmatrix}\\
&=\begin{bmatrix}
\tau_{u,k}^{\mathrm{RL}}\\[1mm]
\tau_{r,k}^{\mathrm{RL}}
-\dfrac{\eta_{\mathrm{ref}}\alpha_r\bar\Phi_k}{b_r}
\end{bmatrix}.
\end{aligned}
}
\label{eq:colregs_reference_action}
\end{equation}
{
When $\bar\Phi_k=0$, the reference recovers the policy action. As the
aggregate activation increases, the reference continuously increases the
starboard yaw correction. Using this common COLREGs-aware action as its
reference, the execution QP combines the CoReCBF constraints associated with
the selected target ships:
}

\begin{equation}
{
\begin{aligned}
\mathbf a_k^*
=\arg\min_{\mathbf a\in\mathcal A}\quad&
\frac{1}{2}
(\mathbf a-\mathbf a_k^{\mathrm{ref}})^{\mathsf T}
\mathbf W
(\mathbf a-\mathbf a_k^{\mathrm{ref}})\\
\mathrm{s.t.}\quad&
L_fH_{i,k}+L_gH_{i,k}\mathbf a+k_HH_{i,k}\ge0,\\
&i=1,\ldots,N.
\end{aligned}
}
\label{eq:multiship_corecbf_qp}
\end{equation}
{
where $\mathbf W=\mathbf W^{\mathsf T}\succ0$. The QP tracks the
COLREGs-aware reference over the actions admitted by the
assembled CoReCBF constraints. When the reference is feasible, the solution
satisfies $\mathbf a_k^*=\mathbf a_k^{\mathrm{ref}}$. Otherwise,
$\mathbf a_k^*$ is its closest feasible realization under the metric induced
by $\mathbf W$. When needed in dense encounters involving multiple ships, a shared
nonnegative slack variable is added to the barrier constraints and penalized
in the objective.
}

\subsection{{Reward Design}}
\label{subsec:reward_design}

{
The abstract reward function introduced in Eq.~\eqref{eq:objective} is
instantiated below for the navigation task considered in this study. It is
used during CWVL-PPO training, while the proposed CoReCBF-QP determines the
executable action during the subsequent execution stage. The step reward is
}
\begin{equation}
{
r_k =
r_k^{\mathrm{prog}}
+r_k^{\mathrm{term}}
+r_k^{\mathrm{risk}}
+r_k^{\mathrm{time}}
+r_k^{\mathrm{cte}}
+r_k^{\mathrm{smooth}}.
}
\label{eq:reward_total}
\end{equation}

{
The progress, time, path deviation, and smoothness terms are
}
\begin{equation}
{
\begin{aligned}
r_k^{\mathrm{prog}} &= w_{\mathrm{prog}}(d_{k-1}-d_k),
&r_k^{\mathrm{time}} &= -0.01,\\
r_k^{\mathrm{cte}} &= -w_{\mathrm{cte}}e_{\mathrm{cte},k},
&r_k^{\mathrm{smooth}} &=
-w_{\mathrm{yaw}}\left(\dot{\psi}_k^{\mathrm{os}}\right)^2.
\end{aligned}
}
\label{eq:reward_regular}
\end{equation}
{
Let $\mathbf p_{\mathrm{start}}$ and $\mathbf p_{\mathrm{goal}}$ denote the
start and goal positions in the global horizontal frame. The goal distance
and path deviation are
}
\begin{equation}
{
\begin{aligned}
d_k
&=\left\|\mathbf p_{\mathrm{goal}}-\mathbf p_k^{\mathrm{os}}\right\|_2,\\
e_{\mathrm{cte},k}
&=
\frac{
\left|
(\mathbf p_k^{\mathrm{os}}-\mathbf p_{\mathrm{start}})
\times
(\mathbf p_{\mathrm{goal}}-\mathbf p_{\mathrm{start}})
\right|
}{
\left\|\mathbf p_{\mathrm{goal}}-\mathbf p_{\mathrm{start}}\right\|_2
}.
\end{aligned}
}
\label{eq:reward_geometry}
\end{equation}
{
The parameter values are $w_{\mathrm{prog}}=30.0$,
$w_{\mathrm{cte}}=0.1$, and $w_{\mathrm{yaw}}=0.05$.

Terminal outcomes are represented by
}
\begin{equation}
{
r_k^{\mathrm{term}}
=R_{\mathrm{goal}}\mathbb{I}_{\mathrm{goal}}
+R_{\mathrm{col}}\mathbb{I}_{\mathrm{collision}}
+R_{\mathrm{timeout}}\mathbb{I}_{\mathrm{timeout}}.
}
\label{eq:reward_terminal}
\end{equation}
{
The indicators denote goal arrival, collision, and episode timeout,
respectively, with $R_{\mathrm{goal}}=200.0$,
$R_{\mathrm{col}}=-300.0$, and $R_{\mathrm{timeout}}=-50.0$.

The predicted encounter risk is penalized before a collision occurs. Let
$\mathcal C_k$ denote the set of target ships satisfying
$0<t_{i,k}^{\mathrm{CPA}}<T_{\mathrm{risk}}$ and
$d_{i,k}^{\mathrm{CPA}}<d_{\mathrm{warn}}$:
}
\begin{equation}
{
\begin{aligned}
r_k^{\mathrm{risk}}={}&-w_{\mathrm{risk}}
\sum_{i\in\mathcal C_k}
\left(\frac{d_{\mathrm{warn}}-d_{i,k}^{\mathrm{CPA}}}
{d_{\mathrm{warn}}}\right)^2\\[-1mm]
&\quad\times\left[\frac{1}{2}+\frac{1}{2}
\left(1-\frac{t_{i,k}^{\mathrm{CPA}}}{T_{\mathrm{risk}}}\right)\right].
\end{aligned}
}
\label{eq:reward_risk}
\end{equation}
{
where $d_{i,k}^{\mathrm{CPA}}$ and $t_{i,k}^{\mathrm{CPA}}$ are defined in
Eq.~\eqref{eq:colregs_cpa}. The parameter values are
$w_{\mathrm{risk}}=3.0$,
$d_{\mathrm{warn}}=4.0\,\mathrm{m}$, and
$T_{\mathrm{risk}}=12.0\,\mathrm{s}$. This term therefore increases as a
potential encounter becomes both closer and more imminent.

The resulting step reward is used to compute discounted returns and
advantages during CWVL-PPO training. After training, the policy action is
passed to the proposed CoReCBF-QP, which selects the executable action under
the assembled safety constraints and the continuous COLREGs-aware reference.
The complete training and execution procedure is summarized in
Algorithm~\ref{alg:training}.
}

\begin{algorithm}[!t]
\caption{{CWVL-PPO Training and CoReCBF-QP Execution}}
\label{alg:training}
\small
\begin{algorithmic}[1]
\STATE \textbf{Input:} Kalman filters $\text{FilterInit}(\cdot)$, $\text{Filter}(\cdot)$; horizon $N_{\text{steps}}$, epochs $K$, total iterations $M$
\STATE \textbf{Initialize:} Actor parameters $\theta$, Critic parameters $\phi$, empty buffer $\mathcal{D}$

\STATE {\textbf{Stage I: CWVL-PPO training}}
\FOR{$\text{iter} = 1, \dots, M$}
    \STATE $\mathbf{o}_0 \gets$ Reset env.\ \& get initial obs.
    \STATE $\mathbf{b}_0 \gets \text{FilterInit}(\mathbf{o}_0)$
    \FOR{$k = 0, \dots, N_{\text{steps}}-1$}
        \FOR{$i = 1, \dots, N$}
            \STATE Compute trust factor $t_{i,k}$ via Eq.~\eqref{eq:credibility_factor}
        \ENDFOR
        \STATE {Determine risk-active targets from current distance and TCPA}
        \STATE Aggregate global trust factor $t_k$ via Eq.~\eqref{eq:credibility_global}
        \STATE {Sample action $\mathbf{a}_k \sim \pi_{\theta}(\cdot\mid \mathbf{b}_k)$}
        \STATE Execute $\mathbf{a}_k$, receive reward $r_k$ and observation $\mathbf{o}_{k+1}$
        \STATE $\mathbf{b}_{k+1}\gets \text{Filter}(\mathbf{b}_k,\mathbf{a}_k,\mathbf{o}_{k+1})$
        \STATE {Store $(\mathbf{b}_k,\mathbf{a}_k,r_k,t_k,\mathbf{b}_{k+1})$ in $\mathcal{D}$}
    \ENDFOR
    \STATE Compute advantages $\hat{\mathcal{A}}_k$ and bootstrapped returns $\hat{\mathcal{R}}_k$ via GAE
    \STATE $\theta_{\mathrm{old}}\gets\theta$
    \FOR{$\text{epoch}=1,\dots,K$}
        \STATE Update Critic via credibility-weighted loss $\mathcal{L}^{V}(\phi)$ in Eq.~\eqref{eq:subsec_cred_aware_loss}
        \STATE Update Actor via PPO-Clip loss $\mathcal{L}^{\pi}(\theta)$ in Eq.~\eqref{eq:subsec_ppo_clip} using $\pi_{\theta_{\mathrm{old}}}$
    \ENDFOR
    \STATE Clear buffer $\mathcal{D}$
\ENDFOR
\STATE {\textbf{Training output:} Optimized Actor parameters $\theta$ and Critic parameters $\phi$}

\STATE {\textbf{Stage II: CoReCBF-QP execution}}
\WHILE{{the navigation episode is active}}
\STATE {Given trained policy $\pi_\theta$, obtain $\mathbf{a}^{\mathrm{RL}}_k \sim \pi_\theta(\cdot\mid \mathbf{b}_k)$}
{
\FOR{$i = 1, \dots, N$}
    \STATE Compute $\beta_{i,k}$, $t_{i,k}^{\mathrm{CPA}}$, $d_{i,k}^{\mathrm{CPA}}$, and $\Phi_{i,k}$ via Eqs.~\eqref{eq:colregs_bearing}--\eqref{eq:colregs_target_duty}
\ENDFOR
\STATE Aggregate $\bar\Phi_k$ via Eq.~\eqref{eq:colregs_aggregate_duty}
\STATE Construct $\mathbf a_k^{\mathrm{ref}}$ via Eq.~\eqref{eq:colregs_reference_action}
\STATE Select target ships according to their current distance and $t_{i,k}^{\mathrm{CPA}}$
\FOR{each selected target ship $i$}
    \STATE Construct the relative state and covariance envelope via Eqs.~\eqref{eq:corecbf_rel_state} and~\eqref{eq:corecbf_credible_cov}
    \STATE Construct the covariance-aware collision geometry, $H_{i,k}$, and its Lie derivatives via Eqs.~\eqref{eq:corecbf_credible_geometry}, \eqref{eq:corecbf_geometry_scale}, \eqref{eq:corecbf_barrier}, and~\eqref{eq:corecbf_lie_derivatives}
\ENDFOR
}
\STATE {Solve Eq.~\eqref{eq:multiship_corecbf_qp} for $\mathbf a_k^*$ using the resulting CoReCBF constraints}
\STATE {Execute $\mathbf{a}_k^*$ and obtain $\mathbf o_{k+1}$}
\STATE {$\mathbf b_{k+1}\gets\text{Filter}(\mathbf b_k,\mathbf a_k^*,\mathbf o_{k+1})$}
\ENDWHILE
\end{algorithmic}
\end{algorithm}

\section{Numerical Results}
\label{section:Experiments}

{
To evaluate the proposed framework, a series of experiments are conducted. First, the simulation environment and experimental setup are defined. Specifically, reproducible scenarios with three to six target ships are constructed, and a scheduled period of mismatch in state estimation is introduced. Under these settings, the proposed CWVL is compared with baselines based on learning and optimization to evaluate robustness under perceptual inconsistency. Furthermore, performance across different numbers of target ships is evaluated under three execution configurations of the proposed CWVL policy: without a safety layer, with CBF-VO, and with the proposed CoReCBF-QP. Finally, success, speed, path length, COLREGs event compliance, and mean online control time are reported.
}

\subsection{Simulation Setup}
\label{subsec:setup}
All simulations are implemented in Python/PyTorch on a desktop computer with an Intel i5-12600KF CPU, an NVIDIA RTX 5060 GPU, and 16 GB of memory. The environment is a $32\,\mathrm{m}\times32\,\mathrm{m}$ rectangular area whose boundaries are treated as static obstacles. The modeled own ship and target ships have a hull length $L=2.0\,\mathrm{m}$ and a beam $B=1.08\,\mathrm{m}$, so each side of the domain spans $16L$. Initial and goal states are placed in opposite regions of the map. The proposed CWVL policy is trained in scenarios with three to six target ships. Its generalization to higher target counts is evaluated in scenarios with seven to ten target ships, with 200 scenarios for each target count. Including the own ship, the vessel density increases from $0.0273L^{-2}$ with six target ships to $0.0430L^{-2}$ with ten. Over the same range, the mean number of target trajectories with scheduled interaction points within $2.5\,\mathrm{m}$ of the nominal own ship route increases from 5.62 to 8.03, while the mean maximum number of scheduled interaction points within a $5\,\mathrm{s}$ window increases from 3.53 to 5.76. These values characterize a compact multi-ship encounter setting with increasing interaction density and temporal overlap. The USV hydrodynamic parameters and training settings are listed in Tables~\ref{tab:hydro_params} and~\ref{tab:training_params}, respectively. Table~\ref{tab:execution_params} summarizes the fixed parameters of the proposed CoReCBF-QP and the COLREGs-aware reference used in these evaluations.

\begin{table}[htbp]
  \centering
  \caption{Hydrodynamic parameters of the USV.}
  \label{tab:hydro_params}
  \begin{tabularx}{\linewidth}{
  >{\raggedright\arraybackslash}l
  >{\centering\arraybackslash}l
  >{\centering\arraybackslash}X
}
    \hline
    Parameters & Value & Description \\
    \hline
    $L$ & 2.0 & Length of the hull (m) \\
    $B$ & 1.08 & Beam of the hull (m) \\
    $m_{11}$ & 19.0 & Surge mass (kg) \\
    $m_{22}$ & 35.2 & Sway mass (kg) \\
    $m_{33}$ & 4.2 & Yaw inertia ($kg\cdot m^2$) \\
    $d_{u}, d_{u2}$ & 4.0, 10.0 & Linear and quadratic drag in surge \\
    $d_{v}, d_{v2}$ & 1.0, 2.0 & Linear and quadratic drag in sway \\
    $d_{r}, d_{r2}$ & 10.0, 15.0 & Linear and quadratic drag in yaw \\
    \hline
  \end{tabularx}
\end{table}

\begin{table}[htbp]
  \centering
  \caption{Relevant parameters used in training.}
  \label{tab:training_params}
  \begin{tabularx}{\linewidth}
{
  >{\raggedright\arraybackslash}l
  >{\centering\arraybackslash}l
  >{\centering\arraybackslash}X
}
    \hline
    Parameter Name & Value & Description \\
    \hline
    Total timesteps & 5,000,000 & Total sampling steps \\
    Batch size & 2048 & Size of the batch \\
    Number of steps & 1024 & Steps per update \\
    Learning rate & $3\times 10^{-4}$ & Learning rate of NN \\
    Gamma & 0.99 & Discount factor \\
    Epochs & 4 & Number of epochs \\
    Clip range & 0.15 & Clipping parameter \\
    Entropy coefficient & 0.02 & Coefficient of entropy \\
    Value function coefficient & 0.5 & Coefficient of value function \\
    Target KL & 0.03 & Target KL divergence \\
    \hline
  \end{tabularx}
\end{table}

\begin{table}[htbp]
  \centering
  \caption{{Fixed parameters used in CoReCBF-QP execution.}}
  \label{tab:execution_params}
  {
  \begin{tabularx}{\linewidth}{
    >{\raggedright\arraybackslash}X
    >{\centering\arraybackslash}l
  }
    \hline
    Parameter group & Value \\
    \hline
    Encounter boundaries $(\theta_h,\theta_{\lim})$
      & $(5^\circ,112.5^\circ)$ \\
    CPA reference gate $(d_{\mathrm{safe}},T_{\mathrm{CPA}})$
      & $(3.0\,\mathrm{m},10.0\,\mathrm{s})$ \\
    Transition slopes $(\kappa_\beta,\kappa_t)$
      & $(60,60\,\mathrm{s}^{-1})$ \\
    Yaw-reference parameters $(\alpha_r,\eta_{\mathrm{ref}})$
      & $(0.5\,\mathrm{rad\,s^{-2}},0.1875)$ \\
    CoReCBF and QP parameters $(R_i,k_H,\mathbf W)$
      & $(2.2\,\mathrm{m},1.0\,\mathrm{s}^{-1},
      \operatorname{diag}(1,2))$ \\
    \hline
  \end{tabularx}
  }
\end{table}

\subsection{Robustness Verification Against Perception Uncertainty}
\label{subsec:robustness}

\begin{table*}[t]
\centering
\caption{{Aggregate performance comparison across scenarios with three to six target ships.}}
\label{tab:robustness_obs3_6}
{
\footnotesize
\setlength{\tabcolsep}{2.0pt}
\renewcommand{\arraystretch}{1.15}
\resizebox{\textwidth}{!}{%
\begin{tabular}{lccccccc}
\toprule
Method
& \shortstack{Success rate\\(\%) $\uparrow$}
& \shortstack{Collision rate\\(\%) $\downarrow$}
& \shortstack{Timeout rate\\(\%) $\downarrow$}
& \shortstack{Mean minimum\\distance ($\mathrm{m}$) $\uparrow$}
& \shortstack{Average speed\\($\mathrm{m\,s^{-1}}$) $\uparrow$}
& \shortstack{Average path length\\($\mathrm{m}$) $\downarrow$}
& \shortstack{Latency\\($\mathrm{ms}$) $\downarrow$} \\
\midrule
DRL-VO
& 76.450 (2.692)$^{**}$
& 23.500 (2.685)$^{**}$
& \textbf{0.050 (0.068)}
& \textbf{3.063 (0.235)}
& 1.243 (0.005)$^{*}$
& 32.116 (0.434)$^{*}$
& 1.08 (0.02) \\

MSE-PPO
& 79.725 (11.605)$^{*}$
& 20.200 (11.614)$^{*}$
& 0.075 (0.168)
& 2.683 (0.130)$^{**}$
& 1.364 (0.074)
& 31.154 (0.495)
& \textbf{0.91 (0.09)} \\

Lag-U
& 81.950 (2.922)$^{*}$
& 17.725 (3.147)$^{*}$
& 0.325 (0.301)
& 2.733 (0.084)$^{*}$
& \textbf{1.411 (0.032)}
& 31.310 (0.343)
& 3.80 (0.01) \\

Hetero-PPO
& 86.125 (12.620)
& 13.725 (12.426)
& 0.150 (0.224)
& 2.637 (0.160)$^{*}$
& 1.323 (0.088)
& 30.548 (0.629)
& 0.92 (0.09) \\

Proposed CWVL
& \textbf{91.375 (4.239)}
& \textbf{8.450 (4.284)}
& 0.175 (0.209)
& 2.923 (0.153)
& 1.271 (0.021)
& 31.177 (0.701)
& 0.93 (0.05) \\

COLREGs-MPCC
& 33.625$^{**}$
& 61.250$^{**}$
& 5.125$^{**}$
& 2.696$^{*}$
& 0.776$^{**}$
& \textbf{28.371}
& 298.20 \\
\bottomrule
\end{tabular}
}

\vspace{0.5ex}
\begin{minipage}{0.99\textwidth}
\scriptsize
MSE-PPO denotes the PPO baseline with a matched implementation and a conventional critic with a single output optimized by mean-squared error. Results for the learning methods are reported as mean (SD), where SD is the sample standard deviation across five independently trained seeds, whereas COLREGs-MPCC is reported as a deterministic point estimate. Boldface indicates the best result for each metric. * and ** indicate that the proposed CWVL significantly outperformed the corresponding baseline at $p<0.05$ and $p<0.01$, respectively.
\end{minipage}
}
\end{table*}

{ To evaluate robustness to perception uncertainty within the training distribution, we compared the proposed CWVL with five baselines in scenarios containing three to six target ships. These comprised four learning baselines, MSE-PPO, Hetero-PPO, DRL-VO, and Lag-U, together with the optimization controller COLREGs-MPCC. For each scenario, a single mismatch window of 30 steps was scheduled from candidate steps grouped by path progress and derived from the replayed target trajectories. At each realized step within the window, the true measurement covariance was set to \(100R\), and the measurement was generated from the target state 20 steps earlier, while the Kalman filter continued to use the nominal covariance \(R\). With \(\Delta t=0.1\,\mathrm{s}\), the scheduled window and measurement delay corresponded to 3.0 s and 2.0 s, respectively. The five learning methods were evaluated using five independently trained seeds per method and 200 scenarios for each number of target ships, yielding 800 evaluation episodes per seed. The deterministic COLREGs-MPCC controller was evaluated once on the corresponding suite of 200 scenarios for each number of target ships. Success, collision, and timeout rates and mean minimum distance were aggregated over all evaluated episodes; speed and average path length were calculated over successful episodes; and online control latency was aggregated over executed control steps. Significance was assessed separately for each performance metric with Bonferroni correction.}

{ Table~\ref{tab:robustness_obs3_6} summarizes the aggregate results across scenarios containing three to six target ships. Among the learning methods, the proposed CWVL achieved the highest mean success rate of \(91.38\%\), followed by Hetero-PPO at \(86.13\%\), Lag-U at \(81.95\%\), MSE-PPO at \(79.73\%\), and DRL-VO at \(76.45\%\). CWVL achieved significantly higher success rates and lower collision rates than MSE-PPO, DRL-VO, and Lag-U, together with significantly larger mean minimum distances than MSE-PPO, Hetero-PPO, and Lag-U. It also significantly improved speed and average path length relative to DRL-VO, although its speed was lower than those of MSE-PPO, Hetero-PPO, and Lag-U. Its online control latency remained comparable to MSE-PPO and Hetero-PPO and lower than that of DRL-VO and Lag-U. In comparison with the optimization controller COLREGs-MPCC, CWVL achieved significantly higher success rate, speed, and mean minimum distance, together with a significantly lower collision rate and lower online control latency, whereas COLREGs-MPCC yielded the shortest average path length.}

\begin{figure*}[!t]
    \centering
    \includegraphics[width=0.94\textwidth]{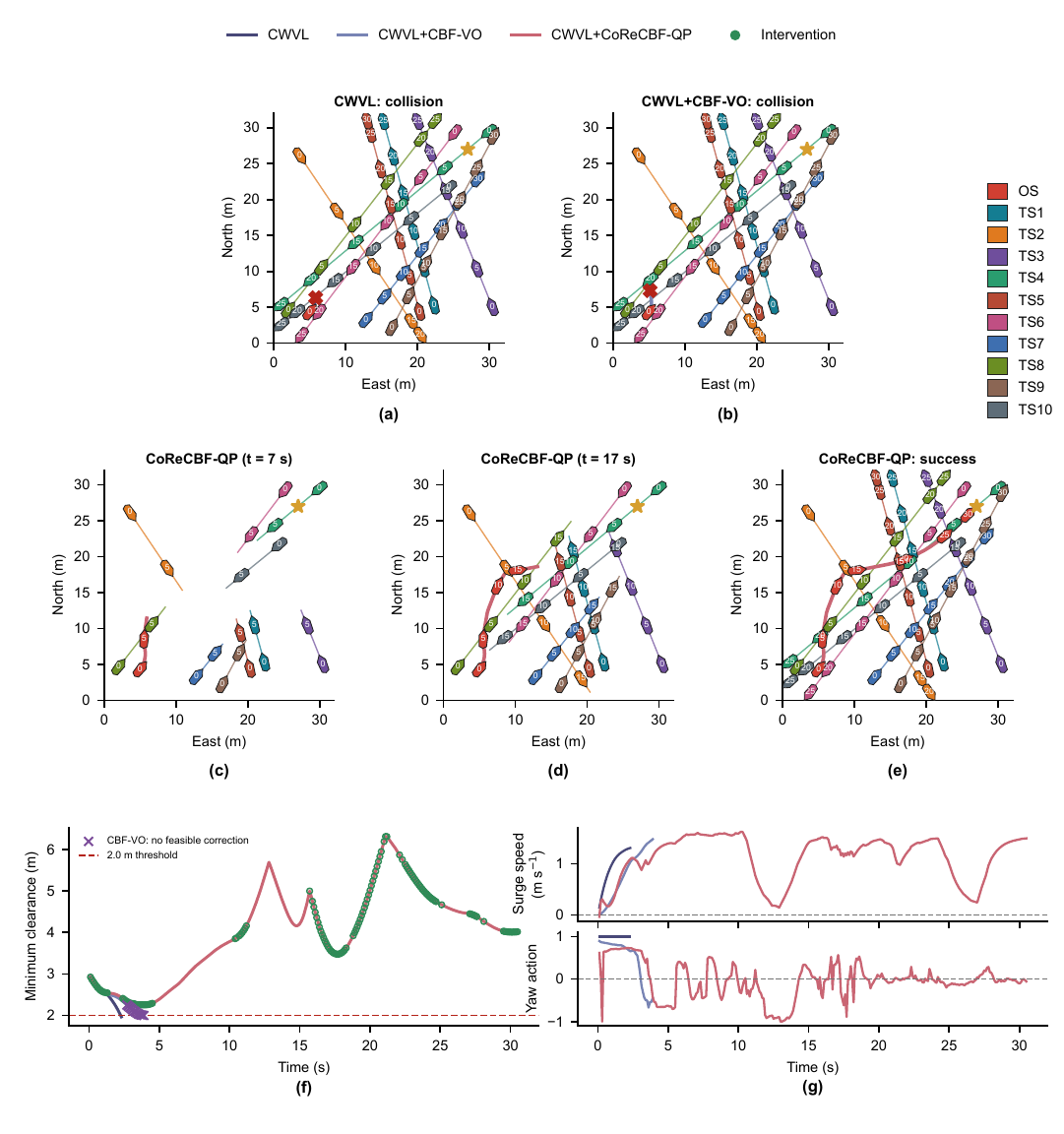}
    \caption{{Representative trajectories and control responses in a high-interaction encounter with ten target ships. (a,b) Trajectories without a safety layer and with CBF-VO, both terminating in collision. (c--e) The proposed CoReCBF-QP trajectory at three key stages: $t=7\,\mathrm{s}$, $t=17\,\mathrm{s}$, and goal arrival at $t=30.5\,\mathrm{s}$. Numerical labels mark vessel positions at intervals of $5\,\mathrm{s}$, and panels (c--e) additionally show the current states. (f) Minimum clearance, where the dashed line denotes the $2.0\,\mathrm{m}$ threshold, green circles denote CoReCBF-QP interventions, and purple crosses denote steps without a feasible CBF-VO correction. (g) Surge speed and yaw action under the three execution configurations.}}
    \label{fig:corecbf_case}
\end{figure*}

\begin{figure*}[!t]
    \centering
    \includegraphics[width=0.94\textwidth]{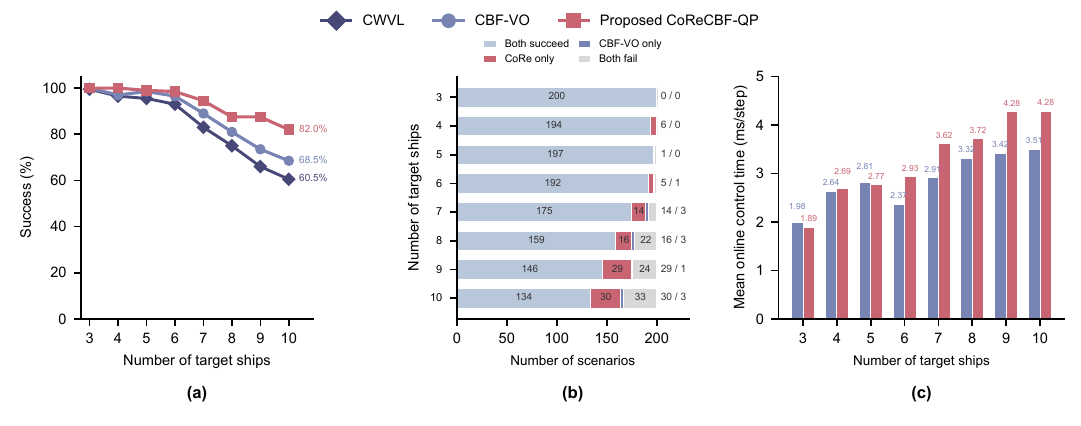}
    \caption{{Generalization of the proposed CWVL policy across different numbers of target ships under three execution configurations. (a) Success rates for CWVL, CWVL+CBF-VO, and CWVL with the proposed CoReCBF-QP. Labels at ten target ships report the corresponding success rates. (b) Outcome comparison at the scenario level between CBF-VO and the proposed CoReCBF-QP. The values to the right report successes unique to the proposed CoReCBF-QP and CBF-VO, respectively. (c) Mean online control time for CBF-VO and the proposed CoReCBF-QP. Values above the bars report the corresponding mean times.}}
    \label{fig:corecbf_count_generalization}
\end{figure*}

\subsection{{CoReCBF-QP in Encounters with Dense Interactions}}
\label{subsec:corecbf_high_interaction}
{
To examine the proposed framework in a compact encounter with dense interactions, we use a reproducible scenario generated from Expected Encounter Points along the nominal USV route. The initial kinematic states of ten target ships are calculated backward from the sampled encounter points while preserving kinematic consistency. This construction increases the occurrence of crossing and head-on interactions. The same scenario is evaluated using the proposed CWVL policy without a safety layer, with CBF-VO, and with the proposed CoReCBF-QP. The initial configuration is given in Table~\ref{tab:initial_states}, where the USV navigates from $(5.00,4.30)$ to the goal at $(27.00,27.00)$.
}

\begin{table}[hbt!]
{
\centering
\caption{Initial states in the ten-target high-interaction encounter.}
\label{tab:initial_states}
\begin{tabularx}{\linewidth}
{
  >{\raggedright\arraybackslash}X
  >{\centering\arraybackslash}X
  >{\centering\arraybackslash}X
}
  
\toprule
Ship number & Initial position & Orientation (deg) \\
\midrule
USV  & $(5.00, 4.30)$   & 50.27  \\
TS1  & $(22.30, 5.22)$  & 105.36 \\
TS2  & $(3.60, 26.12)$  & 303.95 \\
TS3  & $(30.44, 4.98)$  & 111.92 \\
TS4  & $(29.98, 29.49)$ & 219.75 \\
TS5  & $(20.29, 4.35)$  & 105.34 \\
TS6  & $(25.20, 29.35)$ & 232.89 \\
TS7  & $(12.82, 3.24)$  & 51.41  \\
TS8  & $(2.01, 4.61)$   & 52.19  \\
TS9  & $(16.44, 2.16)$  & 61.97  \\
TS10 & $(24.39, 21.68)$ & 219.25 \\
\bottomrule
\end{tabularx}
}
\end{table}
{
Figure~\ref{fig:corecbf_case} first shows the collision processes without a safety layer and with CBF-VO. Without a safety layer, TS8 approaches from the port quarter as the clearance decreases from $2.93\,\mathrm{m}$ at $t=0.1\,\mathrm{s}$ to $1.96\,\mathrm{m}$ at $t=2.3\,\mathrm{s}$. CBF-VO initially reduces the surge and port-turn commands, but ceases to provide a feasible correction from $t=2.7\,\mathrm{s}$ as the encounter continues to close. Collision with TS8 occurs at $t=3.9\,\mathrm{s}$, with a minimum distance of $1.9987\,\mathrm{m}$.
}

{
With the proposed CoReCBF-QP, the constraint associated with TS8 is activated at $t=0.1\,\mathrm{s}$. The resulting surge moderation and starboard correction increase the passing distance to $2.25\,\mathrm{m}$ at approximately $t=3.9\,\mathrm{s}$. By the first snapshot at $t=7\,\mathrm{s}$, the OS has cleared this initial conflict and overtakes TS7 ahead in accordance with Rule~13.
}

{
The second snapshot captures the subsequent crossing and head-on interactions. When TS5 appears on the starboard side at $t=10.4\,\mathrm{s}$, the QP changes the policy's slight port turn into a starboard correction while maintaining forward surge, following the give-way maneuver specified by Rule~15. TS1 enters the crossing area at $t=10.7\,\mathrm{s}$, and the starboard correction is strengthened at $t=10.8\,\mathrm{s}$ to clear TS5 and TS1. When the head-on vessel TS4 approaches, the controller reduces the conflicting port-turn request at $t=11.3\,\mathrm{s}$ and maintains the starboard maneuver required by Rule~14. The snapshot at $t=17\,\mathrm{s}$ shows the resulting avoidance path. The OS subsequently adjusts surge and yaw to clear TS7 at $t=18.7\,\mathrm{s}$, followed by a starboard correction to avoid TS9 at $t=25.1\,\mathrm{s}$ in accordance with Rule~15. After completing these successive maneuvers, the USV reaches the goal at $t=30.5\,\mathrm{s}$. The three snapshots illustrate the evolution of the COLREGs-aware avoidance trajectory.
}

\subsection{Comparative Analysis and Discussion}
\label{sec:comparative_analysis}

{
We evaluate the proposed framework through two complementary experiments. First, we examine the robustness provided by credibility-weighted value learning under scheduled mismatch in state estimation. CWVL is compared with five baselines based on learning and optimization in its training range of three to six target ships. Second, we examine execution performance in scenarios containing three to ten target ships using three configurations of the CWVL policy: without a safety layer, with CBF-VO, and with the proposed CoReCBF-QP. This comparison evaluates CoReCBF-QP across all numbers of target ships, while scenarios containing seven to ten target ships further assess generalization beyond the training range in denser encounters involving multiple ships.
}

{
For the comparison across different numbers of target ships, we report success, speed, path length, COLREGs event compliance, and mean online control time. COLREGs event compliance is evaluated over Rule~14 head-on and Rule~15 give-way crossing events. An event is compliant when the maneuver is timely and substantial, maintains safe separation, resolves the encounter, and produces a valid passing side.
}

{
Across scenarios with three to ten target ships, the aggregate success rates were \(83.63\%\), \(88.00\%\), and \(93.63\%\) without a safety layer, with CBF-VO, and with the proposed CoReCBF-QP, respectively. The corresponding average speeds were \(1.180\), \(1.173\), and \(1.167\,\mathrm{m\,s^{-1}}\). The mean online control times were \(2.897\) and \(3.362\,\mathrm{ms}\) per step for CBF-VO and the proposed CoReCBF-QP.
}

{
Integrating the proposed CWVL policy with the proposed CoReCBF-QP also improved COLREGs event compliance. Across scenarios containing three to ten target ships, the pooled compliance rate increased from \(48.98\%\) to \(52.26\%\). With ten target ships, the compliance rate increased from \(47.51\%\) to \(53.04\%\), corresponding to a gain of \(5.54\) percentage points.
}
\subsection{Generalization Across Numbers of Target Ships}
\label{subsec:count_generalization}

{
Figure~\ref{fig:corecbf_count_generalization} compares three execution configurations of the proposed CWVL policy across scenarios containing three to ten target ships. Scenarios containing three to six target ships cover the training range, while those containing seven to ten target ships evaluate generalization across different numbers of target ships under increasingly complex encounters involving multiple ships. Each number of target ships comprises 200 scenarios.
}

{
Across scenarios containing three to ten target ships, the proposed CoReCBF-QP matched or exceeded the success rate of both alternatives for every number of target ships, with an aggregate advantage of \(5.63\) percentage points over CBF-VO. The improvement remained pronounced beyond the training range, reaching an \(82.0\%\) success rate with ten target ships, compared with \(68.5\%\) for CBF-VO. The corresponding mean online control times were \(3.362\) and \(2.897\,\mathrm{ms}\) per step, both well below the \(100\,\mathrm{ms}\) control period.
}

\section{Conclusion and Future Work}
\label{section:Conclusion and future work}
{
This study presented a unified credibility-aware safe reinforcement learning framework for USV navigation under perception uncertainty. The proposed CWVL improves value learning under mismatch in state estimation by weighting critic updates according to filter credibility. The proposed CoReCBF-QP expands the collision geometry under uncertainty and incorporates braking and turning recovery into the barrier constraints. At execution, a continuous COLREGs-aware reference promotes the starboard maneuvers associated with Rule~14 head-on and Rule~15 give-way crossing encounters. Experiments demonstrate the robustness of CWVL to scheduled mismatch in state estimation. Across scenarios containing three to ten target ships, the proposed CoReCBF-QP matched or exceeded the success rate of both alternatives for every number of target ships. This performance extended beyond the training range to scenarios containing seven to ten target ships, reaching an $82.0\%$ success rate with ten target ships. Combining the proposed CWVL with the proposed CoReCBF-QP also increased COLREGs event compliance across the same range of target ship numbers, including a gain of $5.54$ percentage points with ten target ships. Future work will consider dynamic hydrodynamic disturbances and non-Gaussian belief representations, together with target ships undergoing coordinated turns, acceleration, and deceleration.
}

\bibliographystyle{IEEEtran}
\bibliography{references}
\end{document}